\newcommand{\ignore}[1]{}
\definecolor{forestgreen}{rgb}{0.0, 0.27, 0.13}
\DeclareMathOperator*{\argmax}{\arg\!\max}
\newif\ifshowanswer    
\newcommand{\isitthree}[1]
{
  \ifnum#1=3
    number #1 is 3
  \else
    number #1 is not 3
  \fi
}
\newcommand{\be}{\begin{equation}}
\newcommand{\ee}{\end{equation}}
\newcommand\R{{\mathbb{R}}}
\renewcommand\b{y}
\newcommand\CS{{\mathcal S}}
\renewcommand{\b}[1]{\boldsymbol{#1}}
\newcommand\numberthis{\addtocounter{equation}{1}\tag{\theequation}}
\newtheorem{theorem}{Theorem}
\newtheorem{lemma}{Lemma}
\newcommand{\opt}{^\star}
\newcommand{\lt}{{\ell_\text{t}}}
\begin{document}
\fancyhead{}
\title[Enabling Efficiency-Precision Trade-offs for Label Trees in Extreme Classification]{Enabling Efficiency-Precision Trade-offs \\ for Label Trees in Extreme Classification}

\author{Tavor Z. Baharav}
\affiliation{ \institution{Stanford University}}\email{tavorb@stanford.edu}
\author{Daniel L. Jiang}\authornote{Work done while at Amazon.}
\affiliation{University of Washington}\email{danji@cs.washington.edu}
\author{Kedarnath Kolluri}
\affiliation{Amazon}\email{kkolluri@amazon.com}
\author{Sujay Sanghavi}
\affiliation{Amazon}
\affiliation{University of Texas at Austin} \email{sujayrs@amazon.com}
\author{Inderjit S. Dhillon}
\affiliation{Amazon}
\affiliation{University of Texas at Austin} \email{isd@amazon.com}

\renewcommand{\shortauthors}{Baharav, Jiang, Kolluri, Sanghavi, and Dhillon}

\begin{abstract}
Extreme multi-label classification (XMC) aims to learn a model that can tag data points with a subset of relevant labels from an extremely large label set.
Real world e-commerce applications like personalized recommendations and product advertising can be formulated as XMC problems, where the objective is to predict for a user a small subset of items from a catalog of several million products.
For such applications, a common approach is to organize these labels into a tree, enabling training and inference times that are logarithmic in the number of labels \cite{parabel}.
While training a model once a label tree is available is well studied, designing the structure of the tree is a difficult task that is not yet well understood, and can dramatically impact both model latency and statistical performance.
Existing approaches to tree construction either optimize exclusively for statistical performance or optimize exclusively for latency.
We propose an efficient information theory inspired algorithm to construct intermediate operating points that 
trade off between the benefits of both, which was not previously possible.
We corroborate our theoretical analysis with numerical results, showing that on the Wiki-500K \cite{Bhatia16_XMCRepo} benchmark dataset our method can reduce a proxy for expected latency by up to 28\% while maintaining the same accuracy as Parabel \cite{parabel}.
On several datasets derived from e-commerce customer logs, our modified label tree is able to improve this expected latency metric by up to 20\% while maintaining the same accuracy. Finally, we discuss challenges in realizing these latency improvements in deployed models.
\end{abstract}


\begin{CCSXML}
<ccs2012>
  <concept>
      <concept_id>10002950.10003712.10003713</concept_id>
      <concept_desc>Mathematics of computing~Coding theory</concept_desc>
      <concept_significance>300</concept_significance>
      </concept>
  <concept>
      <concept_id>10002951.10003317.10003347.10003350</concept_id>
      <concept_desc>Information systems~Recommender systems</concept_desc>
      <concept_significance>300</concept_significance>
      </concept>
  <concept>
      <concept_id>10010147.10010257.10010293.10003660</concept_id>
      <concept_desc>Computing methodologies~Classification and regression trees</concept_desc>
      <concept_significance>500</concept_significance>
      </concept>
 </ccs2012>
\end{CCSXML}

\ccsdesc[300]{Mathematics of computing~Coding theory}
\ccsdesc[300]{Information systems~Recommender systems}
\ccsdesc[500]{Computing methodologies~Classification and regression trees}

\keywords{Extreme multi-label classification, Probabilistic label trees}

\maketitle
\section{Introduction}
With the ever increasing size of datasets, a new paradigm of classification problems has emerged in machine learning.
In the setting of multi-label classification, the goal is to learn a model that can tag data points with a subset of relevant labels from a given label set.
One common approach to this problem is 1-vs-All classification, where a separate classifier is learned for each label and all classifiers are evaluated at inference time, resulting in training and inference costs linear in the number of labels \cite{Bhatia16_XMCRepo}.
As the sizes of industrial datasets grow, the number of possible labels in these applications can easily reach hundreds of thousands or even millions, making the linear complexity of 1-vs-All methods prohibitive.
This motivates the paradigm of extreme multi-label classification (XMC), where the number of labels,  the number of points, and their dimensionality, are all extremely large \cite{agrawal2013multi}.
Many modern large-scale industrial applications are routinely modeled as XMC problems, such as webpage annotation \cite{partalas2015lshtc}, text classification \cite{joulin2016bag_fasttext,mikolov2013distributed,you2018attentionxml}, dynamic search advertisement \cite{parabel}, and text similarity search \cite{chang2019xBert}.

To overcome the challenge of extremely large label spaces, many state of the art methods first organize the labels hierarchically into a search tree.
These tree-based methods then learn a separate classifier for each internal node in the tree to predict whether a label relevant to the given context appears in the subtree rooted at that node.
By utilizing the greedy traversal algorithm of beam search, these methods only evaluate the classifiers along a constant number of paths in the tree, resulting in efficient inference (with costs logarithmic in the number of labels for a balanced binary label tree) \cite{parabel,yu2020pecos}.
Thus far, the design of this search tree has fallen into one of two categories: \textit{similarity-based} \cite{parabel} or \textit{coding-theoretic} \cite{joulin2016bag_fasttext}.

Similarity-based methods construct trees that optimize purely for statistical performance.
These methods ensure that similar labels are placed close together in the tree.
This helps yield meaningful label partitions at each internal node of the tree, and so the tree's internal classifiers can be expected to achieve high accuracy.
An example of a model that structures its label tree this way is Parabel \cite{parabel}, which constructs its tree by recursively applying balanced 2-means clustering to the set of label embeddings. 

Such methods are commonly used in XMC solutions for e-commerce applications.
One canonical example is product advertising, where the customer query is used as the input, and the catalog of sponsored products is the set of labels.
Similarly, several forms of personalized recommendations can be modeled in the XMC framework.
For example, the widget "Products related to this item" on an Amazon product page could be an XMC problem with the current product's information as the context and the products in Amazon's catalog as labels.
In such applications, it is as important to retrieve results quickly as it is to have results that are relevant for the customer.
Improvements in training or prediction efficiency of XMC models serving e-commerce applications can also yield savings in infrastructure costs, which must be considered alongside relevance.

\setlength{\columnsep}{7pt} 
\begin{wrapfigure}{R}{0.5\columnwidth}
    \centering
    \vspace{-.35cm} 
	\includegraphics[width=.5\columnwidth, trim=.3cm .5cm .9cm 1.4cm, clip]{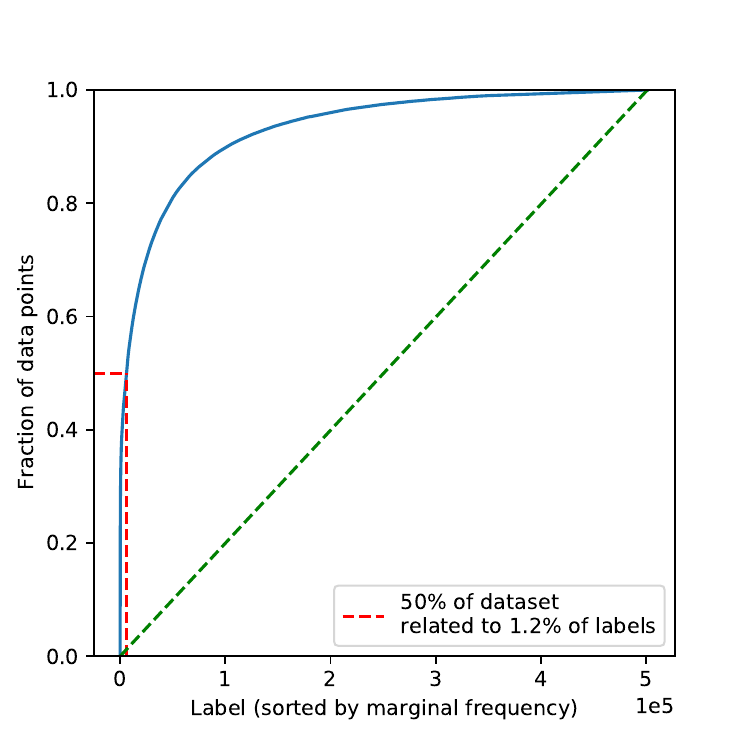}
    \vspace{-.75cm}
    \caption{Label frequency imbalance on Wiki-500K.}
    \vspace{-.45cm}
    \label{fig:wiki500k_labelFreqs}
\end{wrapfigure}
Similarity-based trees, however, do not explicitly aim to address infrastructure costs.
While tree-based methods scale the computational complexity from linear in the number of labels to logarithmic, further improvements have been achieved when some labels are more frequently matched to the context than others, causing an imbalance in label frequencies \cite{mikolov2013distributed}. 
Not surprisingly, such an imbalance occurs in many XMC datasets, where the label frequencies are often well approximated by a power-law distribution \cite{khandagale2020bonsai}.
For example, in Wiki-500K, a common benchmark XMC dataset, the most frequent 1\% of the labels can provide at least one relevant label for 50\% of the dataset (Figure \ref{fig:wiki500k_labelFreqs}, see Appendix \ref{app:additionalSims} for more details).
This phenomenon creates opportunity to further optimize training and inference costs by placing frequently occurring labels higher in the tree.
Existing similarity-based methods do not perform this optimization, as they consider only the label feature space when clustering, placing every label at the same depth.

On the other end of the spectrum, coding-theoretic trees optimize purely for the expected depth of the returned labels, ignoring the label feature space entirely.
One model which utilizes such a tree is fastText \cite{joulin2016bag_fasttext}, which applies Huffman coding to the label frequencies to construct a label tree for a hierarchical softmax.
While such models yield efficient training and inference, observed speed-ups appear to be at the cost of accuracy \cite{mikolov2013distributed}.
For clarity and theoretical grounding, in this work we analyze the tree metric of expected depth (expected latency) as a proxy for training and prediction computational costs \cite{mikolov2013distributed}.

On the surface, these similarity-based and coding-theoretic methods are at an apparent impasse.
The recent work of \cite{busa2019computational} on probabilistic label trees (PLTs), a formalization of the label trees previously discussed, posed a fundamental question that we tackle in this paper: 
\textit{``to find a tree structure that results in a PLT with a low training and prediction computational costs as well as low statistical error seems to be a very challenging problem, not well-understood yet''}.
In this paper, we design a scheme that can interpolate between similarity-based and coding-theoretic trees, allowing one to smoothly trade off between statistical performance and expected latency.
Our contributions in this work are twofold:
\begin{enumerate}
	\item Provide a unified framework to study probabilistic label trees for datasets with both frequencies and similarity measures.
	\item Design an objective and algorithm for constructing PLTs with a tunable hyperparameter to interpolate between the computationally efficient and statistically efficient solutions.
\end{enumerate}

\begin{wrapfigure}{R}{0.5\columnwidth}
	\vspace{-.4cm}
	\includegraphics[width=.5\columnwidth]{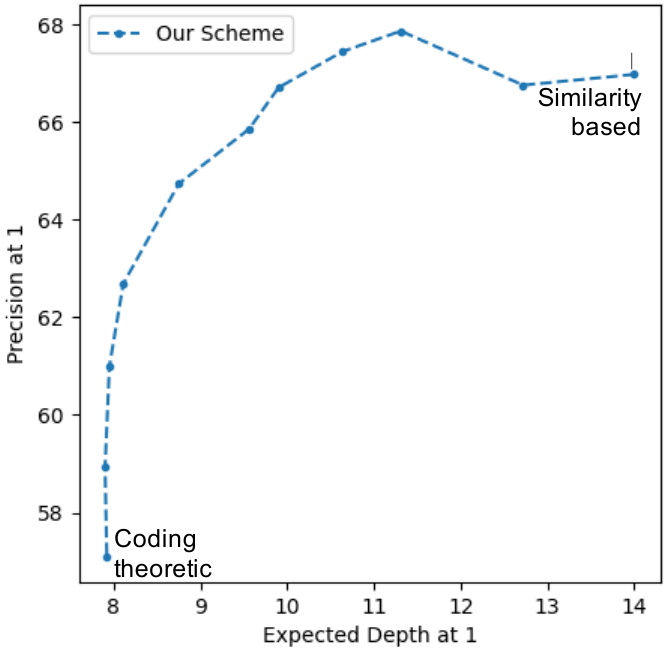}
	\vspace{-.8cm}
	\caption{New operating points offered by our scheme, shown on Wiki-500K dataset.}
	\label{fig:newOperatingPoints}
	\vspace{-.5cm}
\end{wrapfigure}
Our solution trades off between label relevance and expected
latency in a principled manner, and allows for operating points beyond simply these two extremes, as shown in Figure \ref{fig:newOperatingPoints}. 
While there is a general Pareto-style trade-off between these two metrics, we observe a surprising phenomenon across datasets; we are able to marginally \textit{improve} statistical performance while reducing expected depth for some target operating points.
On the Wiki-500K benchmark dataset, we see in Figure \ref{fig:newOperatingPoints} that our method can reduce expected depth by 28\% while maintaining the same accuracy.
We also show that when we replace the coding-theoretic Huffman tree in fastText with a label tree constructed using methods described in this manuscript, we improve model accuracy on the Wiki-500K dataset by 40.6\% while increasing expected depth by only 6.5\%.
We also show that incorporating modified label trees in Parabel improves the average depth traversed by up to 20\% on several XMC datasets derived from e-commerce customer logs with no reduction in statistical performance.

We proceed by discussing related work and PLT background in Section \ref{sec:relatedWorks}.
In Section \ref{sec:clusteringSchemes} we discuss the two extremes of the spectrum that are present in the existing literature.
In Section \ref{sec:interpolatedPLT} we present our novel algorithm that interpolates between these two endpoints.
In Section \ref{sec:simulations} we provide numerical results showing the improvement afforded by our scheme on e-commerce customer logs as well as public datasets, and provide theoretical backing to corroborate our experimental results.
We conclude in Section \ref{sec:conc}.

\section{Related Work} \label{sec:relatedWorks}

XMC has seen a surge of work in recent years due to the rapidly increasing size of datasets.
As previously discussed, one of the most accurate approaches to the XMC problem is to perform 1-vs-All classification \cite{Bhatia16_XMCRepo}.
1-vs-All models such as DiSMEC \cite{dismec}, PD-Sparse \cite{pdsparse}, PPDSparse \cite{ppdsparse}, and XML-CNN \cite{xmlcnn} learn a separate linear classifier for each label and evaluate all classifiers at inference time.
While such methods often have good statistical performance, their inference times are necessarily linear in the number of labels.

Another class of approaches utilizes hashing to reduce an XMC problem down to a few small classification problems, saving storage by obviating the requirement of storing a tree. These methods have similar statistical performance to tree-based ones, but unfortunately require a fixed amount of time for each query, the same as a balanced tree, and cannot easily incorporate frequency information to improve expected latency \cite{vijayanarasimhan2014deep,shrivastava2014asymmetric,huang2018extreme}.

\subsection{Probabilistic Label Trees}
With the increasing size of datasets, a linear dependency on the number of labels is no longer feasible in many e-commerce applications.
A tree-based solution was first proposed in \cite{morin2005hierarchical}, which sped up the training time for a softmax by organizing the labels into a tree, creating a hierarchical softmax.
This tree, with labels as its leaves, operates by routing inputs down the tree to a predicted relevant label (or more generally a set of labels).
In this setting each internal node of this tree contains two classifiers, which estimate the probability that a data point has a relevant label contained in the left subtree or the right subtree.
To ensure that the classifiers perform well, it is desirable to have a tree that obeys the similarity structure of the data; that is, labels that are similar to each other should be close together in this tree, whereas dissimilar labels should be far from each other.
In the initial work of \cite{morin2005hierarchical}, a WordNet based clustering was hand-designed.
However, it was later realized that these trees could be learned from data \cite{mnih2009scalable_parabel_esque}.
For this NLP task, the label embeddings for the softmax were hierarchically clustered into a tree via a divisive algorithm, where at every step the set of leaves is partitioned in half by fitting two Gaussians to their embeddings.
Several recent works have focused on learning improved clusterings for higher accuracy trees \cite{prabhu2014fastxml,pmlr-v48-jasinska16,prabhu2018extreme, yu2020pecos, khandagale2020bonsai}.
Of particular interest to this work {and e-commerce applications} is Parabel \cite{parabel}, which recursively uses balanced spherical 2-means clustering to create a balanced binary tree on the labels.
More generally, these label trees can be analyzed in the context of \textit{Probabilistic Label Trees} (PLTs).
We direct the interested reader to \cite{busa2019computational} for a formalization of PLTs and additional background.
\subsection{Prefix-free coding}
One shortcoming of existing PLT construction methods is that they are optimizing solely for the statistical accuracy of the tree; they do not optimize the inference time latency beyond that of a balanced binary tree.
Starting in \cite{morin2005hierarchical} with the hierarchical softmax, it was observed that this PLT construction could be viewed through the lens of prefix-free coding, as each path to a label in the tree can be seen as a binary string comprising the codeword for that given label.
Information theory, specifically source coding, deals with the problem of representing a set of items in a minimum redundancy manner. 
Dating back to \cite{fano1949transmission} researchers have worked on constructing optimal prefix-free codes; that is, constructing (binary) trees on a set of items such that the expected depth of an item selected randomly from a known distribution over these items would be minimized.
The most well known such method was introduced by Huffman \cite{huffman1952method}, which proceeds agglomeratively by iteratively merging the two least frequent items.

Working on hierarchical softmax, \cite{mikolov2013distributed,mikolov2013efficient} observed that Huffman codes could be utilized to construct PLTs.
While similarity-based PLTs have good statistical performance, they have suboptimal latency because for common inputs one needs to traverse the entire depth of the tree every time, which is expensive.
They noted that more frequent words should be higher in this tree to minimize expected compute time, and used Huffman coding to construct the PLT.
This guaranteed optimal expected depth, minimizing training time.
While faster, this approach yields worse statistical performance than using a word embedding-based label tree for the hierarchical softmax \cite{mnih2009scalable_parabel_esque}. 
Subsequent work like fastText \cite{joulin2016bag_fasttext} also used a Huffman code to construct the PLT.

\subsection{Best of both worlds}

While statistical efficiency and computational efficiency are both desirable features on their own, in practice we want a solution that performs well in both of these metrics.
Recently, the idea of combining these two approaches was considered in \cite{yang2017optimize_semhuff}.
In their work, a Huffman code is generated on the words, and the output tree is post processed by rearranging the leaf nodes within a level to optimize for similarity of the word embeddings.
This approach is often suboptimal however, as it first constructs a coding-theoretic PLT and then performs slight modifications to improve statistical performance, as opposed to optimizing for the two objectives simultaneously.
One primary difficulty in this arises from the divisive nature of balanced 2-means clustering and other similarity-based tree construction methods, as compared to the agglomerative bottom up nature of Huffman and Shannon-Fano coding  \cite{busa2019computational}.

A motivating observation is that while previously utilized codes like Huffman and Shannon-Fano are agglomeratively constructed, there exist prefix-free binary codes that can be divisively constructed.
In particular, one of the first prefix-free binary codes had this property: Fano coding \cite{fano1949transmission}, the direct predecessor of Huffman coding.
Fano coding proceeds by successively sorting the items by frequency, and then partitioning them into two sets with as close to equal frequencies as possible.
While Huffman coding yields an optimal prefix-free binary code (when symbols are coded individually), Fano coding yields a tree with near optimal expected depth, at most 1 worse than optimal \cite{fano_proof}.
However, due to its divisively constructed nature, it is much easier to merge with existing similarity-based tree construction techniques.
In this work we take a step towards understanding this trade-off between statistical error and computational costs, and 
provide a scheme for constructing a PLT that interpolates between the two extremes.

\section{Two extremal trees}\label{sec:clusteringSchemes}

To better describe our interpolating algorithm, we begin by describing in more detail its two endpoints: the similarity-based construction of recursive balanced 2-means clustering, and the coding-theoretic trees from Fano coding.
Defining mathematical notation, in this problem we are given $N$ data points $\b{x_1},\hdots,\b{x_N}\in\R^d$, with corresponding label vectors $\b{y_i} \in \{0,1\}^L$ with $y_{i,\ell}$ indicating whether label $\ell$ is relevant to $\b{x_i}$.
We use boldface to denote vectors, and define $\mathbf{1}$ as the all ones vector of appropriate dimension.
In our binary clustering algorithms, we use the cluster assignment vector $\b{\alpha}\in\{-1,+1\}^L$ to denote the assignment of label $\ell$ to cluster $\alpha_\ell$, where the left child contains all labels $\{\ell : \alpha_\ell = +1\}$ and the right child contains all labels $\{\ell : \alpha_\ell = -1\}$.
The cluster centers we optimize over are $\b{\mu_{+}},\b{\mu_{-}}\in\R^d$.
Utilizing the label matrix $Y\in \{0,1\}^{N \times L}$, we construct the vector of marginal label frequencies $\b{f} \propto Y^\top \mathbf{1}$.
We now describe the two extremal schemes.

\subsection{Balanced spherical 2-means clustering}
Many existing similarity-based XMC tree models use some variant of balanced 2-means clustering to construct trees for their models.
These methods cluster the labels utilizing high dimensional label embeddings, which are constructed such that similar labels have similar label embeddings.
In this work, we utilize Positive Instance Feature Aggregation (PIFA) embeddings $\b{v_\ell}\in\R^d$ for label $\ell$, which are constructed for a given label by averaging the training points that are relevant for that given label (further discussion in \cite{parabel}).

Balanced spherical 2-means clustering is a common variant of $k$-means clustering for $k=2$ using cosine similarity.
In this setting the objects to be clustered, our $L$ label embeddings $\{\b{v_\ell}\}_{\ell=1}^L$, are all rescaled to have unit $\ell_2$ norm, and our cluster centers $\b{\mu_\pm}$ are restricted similarly.
The objective is to find a cluster assignment vector $\b{\alpha}$ and corresponding $\b{\mu_\pm}$ that maximize the sum of similarities between $\b{v_\ell}$ and it's corresponding cluster center ($\b{\mu_{+}}$ if $\alpha_\ell=+1$, $\b{\mu_{-}}$ if $\alpha_\ell=-1$).
An additional balance constraint is enforced by requiring that $|\b{\alpha}^\top \mathbf{1}|\le 1$, restricting the two clusters to be of equal size.
Mathematically, this optimization problem of balanced spherical 2-means clustering can be formulated as below

\vspace{-.1cm}
\begin{equation}
\max_{\substack{\| \b{\mu_\pm} \|_2 = 1\\ \b{\alpha}\in \{ -1,+1\}^L \\ |\b{\alpha}^\top \mathbf{1}| \le 1}} \sum_{\ell=1}^L \frac{1}{L}\left( \frac{1+\alpha_\ell}{2} \b{v_\ell}^\top \b{\mu_{+}} + \frac{1 - \alpha_\ell}{2} \b{v_\ell}^\top \b{\mu_{-}}\right) \label{eq:sp2meanObj}
\end{equation}
where $\alpha_\ell$ corresponds to the cluster assignment of label $\ell$, and defines the partitioning of labels to the left and right children.
Due to the combinatorial constraint of $\b{\alpha}\in \{-1,+1\}^L$, combined with the balanced constraint of $|\b{\alpha}^\top \mathbf{1}| \le 1$, this is an NP-hard problem \cite{bertoni2012size}.
Fortunately, we can still generate a good approximate solution efficiently via alternating maximization, leading to convergence to a local maxima.
For a fixed cluster assignment $\b \alpha$, we have that $\b{\mu_\pm}$ are optimized as being proportional to their respective cluster centers.
Concretely, $\b{\mu_{+}} \propto \sum_{\ell:\alpha_i=+1} \b{v_\ell}$
with $\b{\mu_{-}}$ optimized similarly.
Optimizing $\b \alpha$ for fixed $\b{\mu_\pm}$ requires the following:
\begin{enumerate}
    \item Sort the labels according to $\b{v_\ell}^\top (\b{\mu_{+}} - \b{\mu_{-}})$
    \item Assign the first half of the labels in this sorted order as $\alpha_\ell=+1$, and the latter half as $\alpha_\ell=-1$
    \item If $L$ is odd, assign the middle label ($\ell = \lfloor L/2\rfloor+1$) as $\alpha_\ell=-1+2\cdot\mathds{1}\{\b{v_\ell}^\top (\b{\mu_{+}} - \b{\mu_{-}}) > 0\}$
\end{enumerate}
By performing alternating maximization until the objective value increase from iteration to iteration falls below a specified threshold, we are able to efficiently generate a high quality clustering $\b{\alpha}$, and use this to partition the labels.
Constructing a tree by recursively applying balanced spherical 2-means clustering will ensure that similar labels are close to each other in the tree, but will completely ignore the label frequencies and place all labels at the same depth.

\subsection{Fano tree}

We now examine the details of a Fano coding based tree \cite{fano1949transmission}, which is designed to improve computational efficiency by reducing the expected depth required to traverse at test time.
This scheme proceeds divisively similarly to spherical 2-means clustering, but instead of splitting the labels into the two most similar groups, it splits them into two sets of equal frequency.

This is accomplished in Fano coding by sorting the labels by frequency, and iteratively adding the the largest remaining frequency to the left child until its total frequency surpasses 1/2, sending the remaining labels to the right child.
Reformulating this into an optimization problem outright is difficult due to the combinatorial nature of this task, and so instead we relax the constraints and allow ourselves to fractionally allocate the middle label to give each child a frequency of exactly 1/2.
In order to force higher frequency labels to the left child, having $\alpha_\ell=+1$, we assign value $f_\ell^2$ to this choice, noting that any super-linear function of $f_\ell$ can be used.
This leads to our reformulation of each recursive call of the Fano coding scheme as a linear program (LP): 
\vspace{-.1cm}
\begin{equation}\label{eq:fanoOj}
\max_{ \substack{\b{\alpha}\in [-1,+1]^L \\ \b{\alpha}^\top \b{f}=0}}  \sum_{\ell=1}^L \alpha_\ell f_\ell^2.
\end{equation}
By the fundamental theorem of linear programming the objective attains its maxima on a corner point, and so \eqref{eq:fanoOj} is maximized by letting $\alpha_\ell=+1$ for the largest frequency items and $-1$ for the smaller ones.
Solving this LP may yield one fractional $\alpha_\ell$ (assuming the $f_\ell$ are unique) due to the $\b{\alpha}^\top \b{f}=0$ constraint, which when changed to $+1$ will yield a valid Fano code assignment. 
Constructing a tree by recursively applying this Fano coding scheme yields a tree that prioritizes placing frequent labels at shallow depths to optimize computational efficiency, but ignores the label embeddings.

One important practical consideration is how we should construct the frequency vector our algorithm utilizes to minimize expected depth. While at first glance one may simply want to use the marginal label frequencies (number of occurrences of the label in the dataset), this can yield poor performance.
This is because we want to minimize the expected depth one needs to search to in order to find $k$ relevant labels for a given context.
Considering the simple case of $k=1$, if we have two very frequent labels that always show up together we do not need to put both of them high up in the tree.
We do not care about the label's marginal frequency; we only care about if it will prevent contexts from needing to search deeper to find $k$ relevant labels.
There are several possible ways to create such a frequency vector suited for minimizing expected depth, which we denote $\b{\widetilde{f}}$.
To preserve the flow of this work, we relegate the construction of $\b{\widetilde{f}}$ and further discussion on it to Appendix \ref{app:ftilde}.

\section{Interpolated label trees} \label{sec:interpolatedPLT}

Balanced 2-means clustering and Fano coding are two seemingly disparate algorithms with the former operating solely on the label embeddings and the latter operating solely on the label frequencies.
We would ideally have a scheme that utilizes both similarity and frequency information, with a tunable knob offering a range of operating points trading off between precision and efficiency so that the right model can be selected for the specific application at hand.
In online inference for example, a high-efficiency solution may be desired, even at the expense of a slight drop in accuracy, while for batch inference tasks a high-accuracy solution may be more useful, due to the less stringent latency constraints.
A priori, it is unclear how to construct any intermediary point, let alone to interpolate between the two.
In this section, we show that we can achieve many desirable intermediary operating points, as shown visually in Figure \ref{fig:newOperatingPoints}.

\subsection{Weighted 2-means clustering}
We begin by constructing one such intermediary point, defining a new objective we call weighted 2-means clustering. 
This can best be understood by considering a simple case where each frequency is some integer multiple of a common base frequency. 
Then, one potential clustering scheme is to duplicate each point proportional to its frequency, and run balanced 2-means clustering on this expanded set of points, where after each iteration we assign each original point to the cluster where the majority of its duplicates fall.
Mathematically, our weighted 2-means objective can be formulated as an LP: 
\vspace{-.1cm}
\begin{equation}\label{eq:weighted2MeansObj}
\max_{ \substack{\| \b{\mu_\pm} \|_2 = 1 \\ \b{\alpha}\in [-1,+1]^L\\ \b{\alpha}^\top \b{f}=0}}  \sum_{\ell=1}^L f_\ell\left( \frac{1+\alpha_\ell}{2} \b{v_\ell}^\top \b{\mu_{+}} + \frac{1 - \alpha_\ell}{2} \b{v_\ell}^\top \b{\mu_{-}}\right).
\end{equation}
This constitutes making the following two changes to the balanced 2-means LP in \eqref{eq:sp2meanObj}.
\begin{enumerate}
\item Follow a frequency weighted 2-means objective; weight how well label $\ell$ is matched by its frequency $f_\ell$ instead of $\frac{1}{L}$.
\item Change the constraint $|\b{\alpha}^T \b{1}| \le 1$ to $\b{\alpha}^T \b{f} =0$, and relax $\alpha_\ell$ to be contained in $[-1,1]$.
\end{enumerate}

Examining our alternating minimization algorithm, we have that for a fixed cluster assignment $\b \alpha$, $\b{\mu_\pm}$ are now optimized as the \textit{weighted} means of their clusters.
For fixed $\b{\mu_\pm}$, $\b \alpha$ is optimized similarly to before. Whereas for balanced 2-means we sorted the indices by $(\b{\mu_{+}} - \b{\mu_{-}})^\top \b{v_\ell}$ and assigned the smaller half to cluster 1, we now assign as many labels as we can, proceeding in decreasing order of $(\b{\mu_{+}} - \b{\mu_{-}})^\top \b{v_\ell}$, until their frequencies sum to over 1/2, and fractionally divide the $\alpha_\ell$ of this last label $\ell$ to achieve $\b{\alpha}^\top \b{f}=0$.

We relax the combinatorial problem of optimizing over $\alpha_\ell\in\{-1, +1\}$ to the LP optimizing over $\alpha_\ell \in [-1,+1]$, as otherwise we cannot ensure this frequency balance constraint.
If we relax the balance constraint to be $|\b{f}^\top \b{\alpha}|<c$ and maintain the constraint on our $\alpha_\ell$ to be discrete, then even alternating maximization will be difficult.
This is because for fixed $\b{\mu_\pm}$, due to the unequal costs (frequencies) of labels, assigning the $\alpha_\ell$'s becomes a knapsack problem.

This formulation neatly obeys our intuition on what a similarity and frequency based clustering should look like.
It places higher frequency labels higher in the tree, while also placing similar labels near each other.
Hence, this weighted 2-means clustering falls in between the heavily imbalanced Fano tree and the the fully balanced tree from balanced 2-means.

\subsection{Interpolating with a combined objective}
Equipped with this one intermediary point, we now develop a more fine grained control of the trade-off between frequency and similarity information.
We formulate below a combined objective that interpolates between these three operating points using a user specified hyperparameter $\lambda \in [0,2]$. 
We use standard mathematical notation with $(x)_{+}:=\max(x,0)$ and $x \wedge y := \min(x,y)$.
\begin{align*}
\max_{ \substack{\|\b{\mu_\pm}\|_2=1 \\ \b{\alpha}\in [-1,+1]^L \\ \b{\alpha}^\top \b{f}(\lambda)=0}}  &\sum_{\ell=1}^L f_\ell(\lambda)(2-\lambda)\left( \frac{1+\alpha_\ell}{2} \b{v_\ell}^\top \b{\mu_{+}} + \frac{1 - \alpha_\ell}{2} \b{v_\ell}^\top \b{\mu_{-}}\right)   \\[-.4cm]
&\hspace{2cm} \vspace{-1cm}+ \alpha_\ell(\lambda-1)_{+}f_\ell(\lambda)^2 \numberthis \label{eq:combinedObjective} \\[.1cm]
\text{where }&f_\ell(\lambda) = \frac{(2-\lambda)f_\ell^{\lambda \wedge 1} + (\lambda-1)_{+} \widetilde{f_\ell}+\gamma/L}{(2-\lambda)\sum_{j=1}^L f_j^{\lambda \wedge 1} + (\lambda-1)_{+}+\gamma}
\end{align*}
To understand the above LP, first consider setting the additive smoothing parameter $\gamma=0$.
Then, this LP yields balanced 2-means clustering for $\lambda=0$, our intermediary weighted 2-means clustering at $\lambda=1$, and Fano coding at $\lambda=2$.
It smoothly interpolates between these three points, and for $\lambda \in (0,1)$ can be seen as applying a shrinkage function on the frequencies with $f_\ell(\lambda) \propto f_\ell^\lambda+\gamma/L$.
The interpolation is achieved by linearly placing less weight on the 2-means objective as $\lambda$ ranges from $0$ to $2$, and gradually changing the frequency vector $\b{f}(\lambda)$ from constant at $\lambda=0$ to $\b{f}(\lambda)\propto \b{f} + \gamma/L$ when $\lambda=1$ (balanced 2-means to weighted 2-means).
$\widetilde{\b f}$ plays no role when $\lambda \in [0,1]$, as the nonlinear $(\lambda-1)_{+}$ is only active when $\lambda \in (1,2]$. 
In this regime, the frequency vector linearly trades off between $\b f$ and $\widetilde{\b f}$, as our objective is $(2-\lambda)$ times the weighted 2-means objective plus $(\lambda-1)_{+}$ times the Fano objective on $f_\ell(\lambda)$.
The additive Laplacian smoothing parameter $\gamma>0$ is to guard against distributional mismatch between train and test sets.

Constraining the coordinates of $\b{\alpha}$ to be in $\{-1,+1\}$ leads to a combinatorial optimization problem, with the inherent difficulties this brings (as mentioned before).
Fortunately however, relaxing the constraints to $\b{\alpha} \in [-1,+1]^L$ until rounding the final solution makes this objective easily optimizable via alternating maximization, stopping when the objective value no longer increases.
Even though the optimization problem in \eqref{eq:combinedObjective} is nonconcave, the objective is bilinear in $\b{\alpha}$ and $\b{\mu_\pm}$, and so each phase of alternating maximization is optimizing a linear objective (possibly with a quadratic constraint), which we show can be done efficiently.

\noindent\textbf{Optimizing $\b{\mu_\pm}$: } fixing $\b{\alpha}$, and decoupling $\b{\mu_{+}}$ and $\b{\mu_{-}}$, we see that $\b{\mu_{+}}$ is optimized as
\begin{align*}
\argmax_{\|\b{\mu_{+}}\|_2=1} &\sum_{\ell=1}^L f_\ell(\lambda)(1+\alpha_\ell)\b{v_\ell}^\top \b{\mu_{+}} \propto \sum_{\ell=1}^L f_\ell(\lambda)(1+\alpha_\ell)\b{v_\ell},
\end{align*}
where $\b{\mu_{-}}$ can be optimized similarly. 
This means that once given cluster assignments $\b{\alpha}$, $\b{\mu_\pm}$ are optimized as being proportional to their weighted cluster centers, requiring $O(Ld)$ time.

\noindent\textbf{Optimizing $\b{\alpha}$: } fixing $\b{\mu_\pm}$, the optimal $\b{\alpha}$  can be solved for as
\begin{align*}
\alpha\opt\hspace{-.1cm}
=\hspace{-.1cm}\argmax_{ \substack{\b{\alpha}\in [-1,+1]^L \\ \b{\alpha}^\top \b{f}(\lambda)=0}}&
\sum_{\ell=1}^L \alpha_\ell \underbrace{\left(  f_\ell(\lambda)\frac{2-\lambda}{2}\b{v_\ell}^\top (\b{\mu_{+}}-\b{\mu_{-}}) + (\lambda-1)_{+}f_\ell(\lambda)^2\right)}_{\beta_\ell}\\
=\argmax_{ \substack{\b{\alpha}\in [-1,+1]^L \\ \b{\alpha}^\top \b{f}(\lambda)=0}}&
\b{\alpha}^\top \b{\beta} \numberthis \label{eq:mainTextalphaLP}
\end{align*}
Due to the nice structure of this LP, once we compute $\b{\beta}$ which requires $O(Ld)$ time, we are able to solve the LP efficiently in $O(L)$ time, as stated in the following lemma.

\begin{lemma}\label{lem:alphaLP}
	An optimal $\b{\alpha}\opt$ for \eqref{eq:mainTextalphaLP} can be constructed as:
	\begin{enumerate}
		\item Sort labels by $\beta_\ell / f_\ell(\lambda)$, initialize all $\alpha\opt_\ell=-1$
		\item Starting from the largest $\beta_\ell / f_\ell(\lambda)$, iteratively assign $\alpha\opt_\ell=+1$ until $\b{\alpha\opt}^\top \b{f}(\lambda)>0$
		\item Assign the last label $\ell$ that was set to $\alpha\opt_\ell=1$ fractionally to achieve $\b{\alpha\opt}^\top \b{f}(\lambda)=0$
	\end{enumerate}
\end{lemma}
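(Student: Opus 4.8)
The plan is to treat \eqref{eq:mainTextalphaLP} as a continuous knapsack-type LP and exhibit the proposed construction as the output of the standard greedy exchange argument, then verify optimality via LP duality (complementary slackness). First I would rewrite the problem in a more standard form: substitute $\alpha_\ell = 2t_\ell - 1$ with $t_\ell \in [0,1]$, so that maximizing $\b\alpha^\top\b\beta$ subject to $\b\alpha^\top\b f(\lambda) = 0$ becomes maximizing $\sum_\ell t_\ell \beta_\ell$ (up to the additive constant $-\sum_\ell\beta_\ell$) subject to $\sum_\ell t_\ell f_\ell(\lambda) = \tfrac12\sum_\ell f_\ell(\lambda)$ and $t_\ell \in [0,1]$. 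This is exactly the fractional knapsack problem with item ``values'' $\beta_\ell$, ``weights'' $f_\ell(\lambda)$, and total capacity equal to half the total weight; note $f_\ell(\lambda) > 0$ for all $\ell$ provided $\gamma > 0$ (or more generally whenever the denominators are positive), so the ratios $\beta_\ell/f_\ell(\lambda)$ in step (1) are well-defined. The classical solution to fractional knapsack is precisely: sort by value-to-weight ratio $\beta_\ell/f_\ell(\lambda)$ in decreasing order, greedily set $t_\ell = 1$ until the capacity is exceeded, and set the ``critical'' item fractionally so the capacity constraint holds with equality; translating back through $\alpha_\ell = 2t_\ell - 1$ gives exactly the three steps in the lemma.

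To make this rigorous I would establish optimality directly rather than cite the knapsack folklore. I would write the KKT/LP-duality conditions for the form in $t$: let $\nu$ be the multiplier for the equality constraint $\sum_\ell t_\ell f_\ell(\lambda) = \tfrac12\sum_\ell f_\ell(\lambda)$, and $\mu_\ell \ge 0$, $\eta_\ell \ge 0$ the multipliers for $t_\ell \le 1$ and $t_\ell \ge 0$ respectively. Stationarity reads $\beta_\ell - \nu f_\ell(\lambda) - \mu_\ell + \eta_\ell = 0$, i.e. $\mu_\ell - \eta_\ell = f_\ell(\lambda)\big(\beta_\ell/f_\ell(\lambda) - \nu\big)$. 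Choosing $\nu$ to be the threshold ratio $\beta_{\ell^\star}/f_{\ell^\star}(\lambda)$ of the critical label $\ell^\star$, one checks: for labels with ratio strictly above $\nu$ we need $t_\ell = 1$, so take $\mu_\ell = f_\ell(\lambda)(\beta_\ell/f_\ell(\lambda) - \nu) > 0$, $\eta_\ell = 0$ — consistent with complementary slackness; for labels with ratio strictly below $\nu$ we need $t_\ell = 0$, so take $\eta_\ell = f_\ell(\lambda)(\nu - \beta_\ell/f_\ell(\lambda)) > 0$, $\mu_\ell = 0$; for the critical label(s) with ratio exactly $\nu$, set $\mu_\ell = \eta_\ell = 0$, and $0 \le t_{\ell^\star} \le 1$ is free, chosen to satisfy the capacity equality. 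Thus the primal point produced by the construction, together with this dual point, satisfies primal feasibility, dual feasibility, and complementary slackness, hence is optimal by LP strong duality. I would also note feasibility of the construction explicitly: since we start below capacity (empty set has weight $0 < \tfrac12\sum f_\ell(\lambda)$) and adding all items overshoots (full set has weight $\sum f_\ell(\lambda) > \tfrac12\sum f_\ell(\lambda)$), a critical label always exists and the fractional adjustment in step (3) lands $t_{\ell^\star}$ in $[0,1]$, so $\b\alpha^\star \in [-1,+1]^L$ with $\b{\alpha^\star}^\top\b f(\lambda) = 0$.

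I expect the only genuine subtlety — and the one I would be careful about — to be degeneracy when several labels share the same ratio $\beta_\ell/f_\ell(\lambda)$, in particular ties at the threshold value $\nu$. The clean resolution is that the argument above never needs a strict ordering among equal-ratio labels: any label with ratio exactly $\nu$ may take any value in $[0,1]$ in an optimal solution (its reduced cost is zero), so whatever tie-breaking the sort in step (1) uses, the resulting point is optimal; only the aggregate mass assigned to the threshold group is pinned down, by the capacity equation. I would state this as a one-line remark. A secondary point worth a sentence is the edge case $f_\ell(\lambda) = 0$, which cannot occur when $\gamma > 0$; if one insists on $\gamma = 0$, a label with $f_\ell(\lambda) = 0$ is free (it does not affect the constraint) and should simply be assigned $\alpha_\ell = \sgn(\beta_\ell)$, but since the paper's construction assumes $f_\ell(\lambda) > 0$ I would not belabor this. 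The running time claims ($O(L\log L)$ for the sort, or $O(L)$ via a median-of-medians / quickselect-style partition around the threshold ratio as in weighted-median selection, plus $O(L)$ for the prefix sums) follow immediately once correctness is established.
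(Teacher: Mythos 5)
Your proposal is correct and follows essentially the same route as the paper: both recast the problem as a continuous (fractional-knapsack) LP over $[0,1]^L$ with a single frequency-balance equality, and both certify optimality of the greedy ratio-sorted solution by exhibiting a dual/KKT certificate in which the multiplier on the balance constraint is the threshold ratio $\beta_{\ell}/f_{\ell}(\lambda)$ of the critical label and the remaining multipliers are the nonnegative reduced costs $\beta_\ell - f_\ell(\lambda)\nu$. Your explicit treatment of ties at the threshold and of feasibility of the fractional adjustment is a welcome addition but does not change the substance of the argument.
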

\begin{proof}[Proof of Lemma \ref{lem:alphaLP}]
We show that this $\b{\alpha\opt}$ is an optimal solution for \eqref{eq:mainTextalphaLP} by analyzing the dual LP.
To start, we modify \eqref{eq:mainTextalphaLP} to obtain an LP in standard form by shifting the $\alpha_\ell$ to range between $[0,1]$ instead of $[-1,1]$ (the original $\b{\alpha}$ can be obtained by shifting and rescaling). We replace $\b{f}(\lambda)$ by $\b{f}$ for brevity, and denote the optimal primal value as $p\opt$, where
\begin{equation}
p\opt= \max_{ \substack{0\le \b{\alpha}\le 1 \\ \b{\alpha}^\top \b{f}=1/2}} \b{\alpha}^\top \b{\beta}.\numberthis \label{eq:LPtoDual}
\end{equation}
We prove Lemma \ref{lem:alphaLP} by showing that there exists a feasible solution to the dual LP of \eqref{eq:LPtoDual}, which achieves objective value equal to that of our $\b{\alpha}\opt$ in the primal.
We can construct such an optimal $\b{\alpha}\opt$ by initializing all coordinates to 0, sorting the indices by $\beta_\ell/f_\ell$, iteratively assigning $\alpha\opt_\ell=1$ until $\b{\alpha\opt}^\top \b{f}>1/2$, then setting this final adjusted index (which we call $\lt$) to have $\alpha\opt_\lt=\frac{1}{f_\lt}\left(\frac{1}{2}-\sum_{i \in \CS} f_i \right)$.
Defining the set of indices where $\alpha\opt_i=1$ from the above scheme as $\CS$, we obtain $p\opt\ge\sum_{i \in \CS} \beta_i + \alpha_\lt\opt \beta_\lt$. Denoting the $L\times L$ identity matrix by $I_L$ and defining 
$A=\begin{bmatrix}
I_L &f & -f
\end{bmatrix}^\top, \b{b} =  \begin{bmatrix}
\mathbf{1}_L^\top & \frac{1}{2} & -\frac{1}{2}
\end{bmatrix}^\top$ 
we obtain our dual LP with value $d\opt$
\begin{equation}\label{eq:dual}
d\opt = \min_{\substack{ \b{y} \ge 0 \\
		A^\top \b{y}\ge \b{\beta}}} \b{y}^\top \b{b}.
\end{equation}
Denoting by  $\b{y}\opt$ our proposed optimal solution to eq. \eqref{eq:dual}, we see by complementary slackness that we can set $y_i\opt=0$ whenever $[A\b{\alpha\opt}-\b{\beta}]_i>0$.
Considering the case where there is a fractional $\alpha_i\opt$, that is $\sum_{i \in \CS} f_i<\frac{1}{2}$ and so $\alpha_\lt\in(0,1)$, we must have that $y_i\opt=0$ for $i\in \CS^\mathsf{c}$ and for $i=\lt$.
To construct the remaining entries of $\b{y}$, we set $y_{L+1}\opt= \beta_\lt / f_\lt$ with $y_{L+2}=0$, and $y_i\opt=\beta_i-f_i y_{L+1}\opt$ for $i \in \CS$; this allows us to satisfy $A^\top \b{y}\opt \ge \b{\beta}$.
We then see that our dual objective value is
\begin{equation}
    d\opt
\le \b{b}^\top \b{y}\opt
=\left(\sum_{i \in \CS} \beta_i-f_i y_{L+1}\opt \right) + \frac{1}{2}y_{L+1}\opt
=\sum_{i \in \CS} \beta_i + \alpha_\lt\opt \beta_\lt. \label{eq:dualUB}
\end{equation}

Since the optimal dual objective value upper bounds the optimal primal objective value (i.e., $d\opt \ge p\opt$) and our $\b{\alpha\opt}$ achieves the upper bound given by the dual in \eqref{eq:dualUB}, $\b{\alpha\opt}$ is an optimal solution for the primal \cite{boyd2004convex}.
Where this proof assumed that $\alpha_\lt\in(0,1)$, the case of $\alpha_\lt =0$ follows identically, with $\alpha_\lt=1$ requiring us to increment the index $\lt$ to the next element in sorted order of $\beta_\ell/ f_\ell$.	
\end{proof}

Now that we have shown that each step of alternating maximization can be performed efficiently, we prove convergence of the overall procedure in the following theorem.
\begin{theorem}
	Performing alternating maximization on $\b{\alpha}, \b{\mu_\pm}$ for the optimization problem in \eqref{eq:combinedObjective} converges to a stationary point.
\end{theorem}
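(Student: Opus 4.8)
The plan is to run the classical two-block coordinate-ascent convergence argument, exploiting the fact that the objective of \eqref{eq:combinedObjective}, which we write as $g(\b{\alpha},\b{\mu})$ with $\b{\mu}$ denoting the pair $(\b{\mu_+},\b{\mu_-})$, is a polynomial that is bilinear in the block $\b{\alpha}$ and the block $\b{\mu}$ up to terms affine in $\b{\alpha}$ alone, hence continuously differentiable. Three structural observations set things up. First, the frequency vector $\b{f}(\lambda)$ is fixed throughout (it depends only on the data and on $\lambda$), with strictly positive entries when $\gamma>0$, so the feasible set $\CF = \{\b{\alpha}\in[-1,+1]^L : \b{\alpha}^\top\b{f}(\lambda)=0\}\times\{\|\b{\mu_+}\|_2=\|\b{\mu_-}\|_2=1\}$ is nonempty (it contains $\b{\alpha}=\b{0}$) and compact, so $g$ is bounded on it. Second, the $\b{\mu}$ half-step decouples into the two sphere-constrained linear maximizations recorded in the ``Optimizing $\b{\mu_\pm}$'' paragraph, and the $\b{\alpha}$ half-step is solved exactly by Lemma~\ref{lem:alphaLP}, so each half-step performs an exact block maximization. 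Third, grouping $\b{\mu_+}$ and $\b{\mu_-}$ into a single block is what makes the procedure a two-block scheme, which will let the argument go through without any uniqueness of the block maximizers.

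The first step is to obtain monotone convergence of the objective together with coordinatewise optimality of limit points. Writing $(\b{\alpha}^{(t)},\b{\mu}^{(t)})$ for the iterates, with the $\b{\mu}$ half-step preceding the $\b{\alpha}$ half-step in each round, exactness of the block updates gives $g(\b{\alpha}^{(t)},\b{\mu}^{(t)})\le g(\b{\alpha}^{(t)},\b{\mu}^{(t+1)})\le g(\b{\alpha}^{(t+1)},\b{\mu}^{(t+1)})$, so $g$ is nondecreasing along the iterates and, being bounded above on $\CF$, converges to some value $g\opt$; the intermediate values $g(\b{\alpha}^{(t)},\b{\mu}^{(t+1)})$ recorded right after a $\b{\mu}$ half-step converge to $g\opt$ as well. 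By compactness of $\CF$, pass to a subsequence along which $(\b{\alpha}^{(t_k)},\b{\mu}^{(t_k)})\to(\bar{\b{\alpha}},\bar{\b{\mu}})$. Introduce the two block optimal-value functions $V_\mu(\b{\alpha}) = \max_{\|\b{\mu_+}\|_2=\|\b{\mu_-}\|_2=1}g(\b{\alpha},\b{\mu})$ and $V_\alpha(\b{\mu}) = \max_{\b{\alpha}\in[-1,+1]^L,\ \b{\alpha}^\top\b{f}(\lambda)=0}g(\b{\alpha},\b{\mu})$; since in each case the inner feasible set is fixed and compact and $g$ is continuous, both $V_\mu$ and $V_\alpha$ are continuous (indeed $V_\mu(\b{\alpha})$ is an explicit sum of Euclidean norms of affine functions of $\b{\alpha}$ plus a term affine in $\b{\alpha}$). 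Exactness of the $\b{\mu}$ half-step gives $g(\b{\alpha}^{(t)},\b{\mu}^{(t+1)}) = V_\mu(\b{\alpha}^{(t)})$, hence $V_\mu(\b{\alpha}^{(t_k)})\to g\opt$, and continuity of $V_\mu$ forces $V_\mu(\bar{\b{\alpha}}) = g\opt = g(\bar{\b{\alpha}},\bar{\b{\mu}})$, i.e.\ $\bar{\b{\mu}}$ maximizes $g(\bar{\b{\alpha}},\cdot)$; symmetrically, exactness of the $\b{\alpha}$ half-step gives $g(\b{\alpha}^{(t)},\b{\mu}^{(t)}) = V_\alpha(\b{\mu}^{(t)})$, so $V_\alpha(\bar{\b{\mu}}) = g\opt = g(\bar{\b{\alpha}},\bar{\b{\mu}})$ and $\bar{\b{\alpha}}$ maximizes $g(\cdot,\bar{\b{\mu}})$ over the polytope. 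Thus $(\bar{\b{\alpha}},\bar{\b{\mu}})$ is a coordinatewise maximum.

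The second step is to upgrade coordinatewise optimality to stationarity of \eqref{eq:combinedObjective}. On the $\b{\alpha}$-block the feasible set is a convex polytope, so block-maximality gives $\langle\nabla_{\b{\alpha}}g(\bar{\b{\alpha}},\bar{\b{\mu}}),\ \b{\alpha}-\bar{\b{\alpha}}\rangle\le 0$ for every feasible $\b{\alpha}$; on each $\b{\mu}$-block $g$ is linear, so a maximizer on the unit sphere has its block-gradient parallel to the sphere normal (zero Riemannian gradient), hence nonpositive directional derivative along every feasible direction in that block. Since $\CF$ is a Cartesian product, feasible directions decompose blockwise, so the absence of a first-order ascent direction within each block implies there is none for the joint problem: $(\bar{\b{\alpha}},\bar{\b{\mu}})$ is a stationary point of \eqref{eq:combinedObjective}, which is the asserted conclusion --- read, as is standard for such statements, as ``the objective values converge and every limit point of the iterate sequence is stationary''; norm-convergence of the iterates themselves would require additional regularity (such as a gradient-dominance condition) and is not pursued here. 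I expect the main obstacle to be the coordinatewise-optimality step: neither the $\b{\alpha}$-LP of Lemma~\ref{lem:alphaLP} nor --- in the degenerate case of a vanishing weighted cluster sum --- the $\b{\mu}$ update need have a unique maximizer, so one cannot argue via closedness of the argmax maps and must instead route everything through continuity of the value functions $V_\mu$ and $V_\alpha$; this works precisely because the scheme was arranged to have only two blocks, which is also why $\b{\mu_+}$ and $\b{\mu_-}$ must be updated jointly rather than cycled separately.
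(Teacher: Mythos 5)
Your proof is correct, but it takes a genuinely different route from the paper's. The paper gives a finite-termination argument in the style of the classical $k$-means convergence proof (it explicitly mirrors Theorem 2.2 of Parabel): because the $\b{\alpha}$ produced by Lemma \ref{lem:alphaLP} has at most one fractional coordinate, which is determined by the remaining $\pm 1$ entries through the constraint $\b{\alpha}^\top\b{f}(\lambda)=0$, only $2^{L-1}$ assignments can appear at the end of an iteration; since the objective must strictly increase until the stopping rule fires and the $\b{\mu_\pm}$ at the end of an iteration are determined by $\b{\alpha}$, no assignment can repeat, so the algorithm halts after finitely many iterations at a point where neither block update improves. You instead run the continuous two-block coordinate-ascent analysis: monotonicity and boundedness of the objective, compactness of the product feasible set, continuity of the block value functions $V_\mu$ and $V_\alpha$ to get coordinatewise optimality of limit points, and then the blockwise decomposition of feasible directions to upgrade coordinatewise optimality to stationarity. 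The trade-off is real: the paper's combinatorial argument yields actual termination in finitely many steps, but it leans on the discrete structure of the LP solutions and leaves the passage from ``no improving block update'' to ``stationary point'' implicit; your argument never uses the discreteness of the iterates, handles non-uniqueness of the block maximizers cleanly through the value functions rather than argmax maps (which is indeed why grouping $\b{\mu_+},\b{\mu_-}$ into a single block matters), and makes the stationarity claim precise, but it only delivers convergence of objective values and stationarity of subsequential limit points rather than finite termination. Both arguments legitimately establish the theorem as stated.
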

\begin{proof}
	Defining one iteration as comprising both a $\b{\mu_\pm}$ and an $\b{\alpha}$ optimization phase, we see similarly to the proof of Theorem 2.2 in \cite{parabel} that since the objective value must increase over each iteration, no configuration of $\b{\alpha}$ will be repeated.
	This is because the $\b{\mu_\pm}$ at the end of two iterations must be the same if the $\b{\alpha}$ at the end of those two iterations are the same. However, this would mean that the objective value did not increase in this iteration, and so the procedure would have terminated.
	We observe that there are at most $2^{L-1}$ possible $\b{\alpha}$ vectors using our iteration scheme, as the $\b{\alpha}$ vectors we construct have at most 1 fractional entry, which (if feasible) is uniquely determined by the other $L-1$ entries due to the balancedness constraint of $\b{\alpha}^\top \b{f}(\lambda)=0$.
	As every non-fractional entry will be $\pm 1$, this gives $2^{L-1}$ total $\b{\alpha}$ that can appear at the end of an iteration.
	Thus, the algorithm will converge to a stationary point in a finite number of iterations.
\end{proof}
This result can be thought of as convergence to a Nash Equilibrium in a nonconcave game, where two players, one controlling $\b{\mu_\pm}$ and one controlling $\b{\alpha}$, alternate best responding to the other's actions.
Due to the nonconcave objective and constraint set our guarantees for convergence of alternating maximization are to a stationary point and not to a local (or global) maxima.

\section{Experiments}\label{sec:simulations}

We demonstrate the effectiveness of our scheme by using it to augment two widely used tree-based XMC models.
Where the similarity-based trees optimize for precision and coding theoretic trees optimize for expected depth, we show that by varying $\lambda$ we can easily interpolate between these two extremes.
We provide results on the public benchmark XMC datasets AmazonCat-13K \cite{mcauley2013hidden_amazoncat13k}, Amazon-670K \cite{mcauley2013hidden_amazoncat13k}, and Wiki-500K \cite{Bhatia16_XMCRepo}.
We also provide results on 5 large e-commerce datasets with up to 10 million labels.
Table \ref{tab:datasets} lists the number of training points $N$, the dimension of the points $d$, the number of labels $L$, the number of test points $N'$, and the average number of labels per training point for each public dataset. Table \ref{tab:amazon_datasets} lists the same statistics for each e-commerce dataset.

\begin{table}[h]
\begin{tabular}{l|lllll}
\hline
Dataset       & $N$         & $d$         & $L$       & $N'$      & Avg labels/pt  \\ \hline
AmazonCat-13K & 1.2M & 204K   & 13K   & 307K & 5.04                                        \\
Wiki-500K     & 1.8M & 2.4M & 501K & 784K & 4.77                                           \\
Amazon-670K   & 490K   & 136K   & 670K & 153K & 5.45 \\ \hline
\end{tabular}
\caption{Public XMC dataset statistics. }
\label{tab:datasets}
\vspace{-.8cm}
\end{table}

We first use our algorithm to augment the similarity-based XMC method Parabel \cite{parabel}. In particular, we replace Parabel's tree with trees constructed by our algorithm, improving their expected depth on public datasets.
We then use our algorithm to augment the coding-theoretic tree used by fastText. In particular, we replace fastText's Huffman-based hierarchical softmax with our algorithm's PLT, improving the statistical performance of the model on public datasets.
{Finally, we demonstrate the effectiveness of the augmented Parabel model on large e-commerce datasets, where we show an improvement in expected depth of up to 20\%.}

To measure statistical performance we use precision at $k$, which is computed as the fraction of true positive labels out of the $k$ labels predicted by a model, averaged over the test contexts.
To quantify computational efficiency, we utilize expected depth at $k$ as a proxy for expected prediction latency, defined as the average depth (over the test contexts) searched to in our PLT.
For a given context, this is obtained by looking at the top $k$ predicted labels, and taking the depth of the deepest returned label.
Creating a PLT prediction algorithm that realizes these expected depth gains as wall-clock improvements is left as future work, as this is an application and implementation-specific task.
Additional experiment details can be found in Section \ref{sec:experimental_details}.

\vspace{-.1cm}
\subsection{Augmented Parabel}\label{sec:simulations_parabel}

We augment Parabel with our trees for different values of $\lambda$ and evaluate the performance on the large-scale public XMC datasets Amazon-670K and Wiki-500K in Figure \ref{fig:parabel}.
We interpolate the full spectrum from fully similarity-based ($\lambda=0$, standard Parabel) to coding-theoretic ($\lambda=2$).

While there is a general Pareto-style trade-off between expected depth and precision, we observe a surprising phenomenon in our numerical experiments; we are able to marginally \textit{improve} precision while reducing expected depth for small $\lambda>0$.
This means that, even without translating these expected depth gains into wall-clock improvements, we are able to improve model precision at effectively no cost.
One possible explanation for this improvement is the fact that our interpolated scheme, compared to Parabel's clustering algorithm, reduces the depth of frequently accessed ``popular'' labels.
Since each level in the tree compounds the error in routing and prediction, having frequent labels higher in the tree improves their prediction accuracy.

Additionally, $\widetilde{\b f}$ can be constructed via several different methods, as previously discussed.
In Figure \ref{fig:parabel} we show several minor modifications to our original scheme (depicted in blue).
In the orange curve, we construct $\widetilde{\b{f}}$ in a greedy manner; that is, we iteratively find the highest frequency label in the dataset, remove it and all contexts that contain it from the dataset (assigning it frequency equal to the number of removed contexts), and repeat.
In the blue curve, we instead construct $\widetilde{\b{f}}$ by sorting the labels by their marginal frequencies, then iterate over each context and assign its frequency to the label it contains with the highest marginal frequency.

\begin{figure}[b]
\captionsetup[subfigure]{aboveskip=1pt}
	\centering
	\vspace{-.3cm}
	\begin{subfigure}[b]{0.23\textwidth}
		\includegraphics[width=\columnwidth, trim=.7cm .3cm 1.2cm 1.5cm, clip]{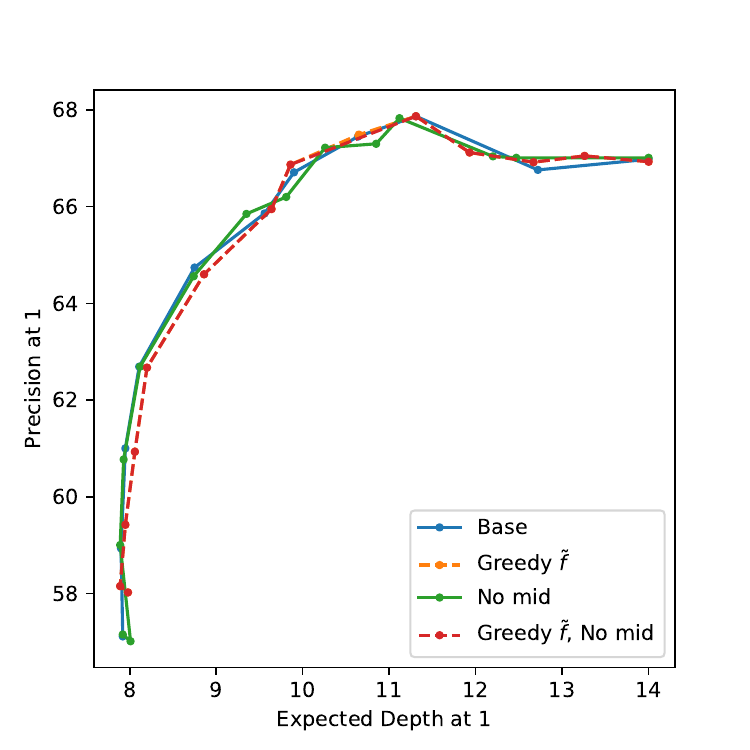}
		\caption{p@1 for Wiki-500K}
		\vspace{.05cm}
	\end{subfigure}
	\hfill
	\begin{subfigure}[b]{0.23\textwidth}
		\includegraphics[width=\columnwidth, trim=.7cm .3cm 1.2cm 1.5cm, clip]{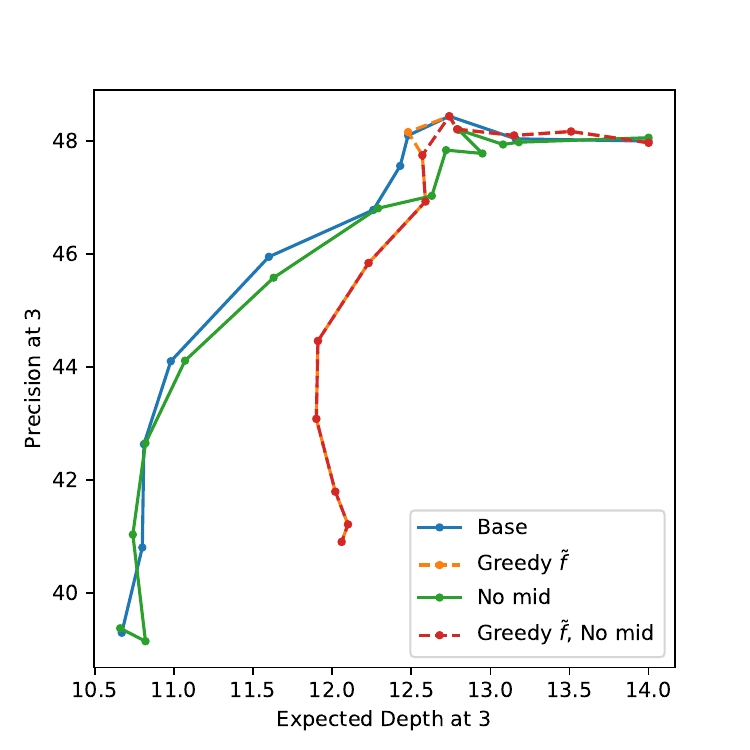}
		\caption{p@3 for Wiki-500K}
		\vspace{.05cm}
	\end{subfigure}
	\begin{subfigure}[b]{0.23\textwidth} 
		\includegraphics[width=\columnwidth, trim=.4cm .3cm 1.2cm 1.5cm, clip]{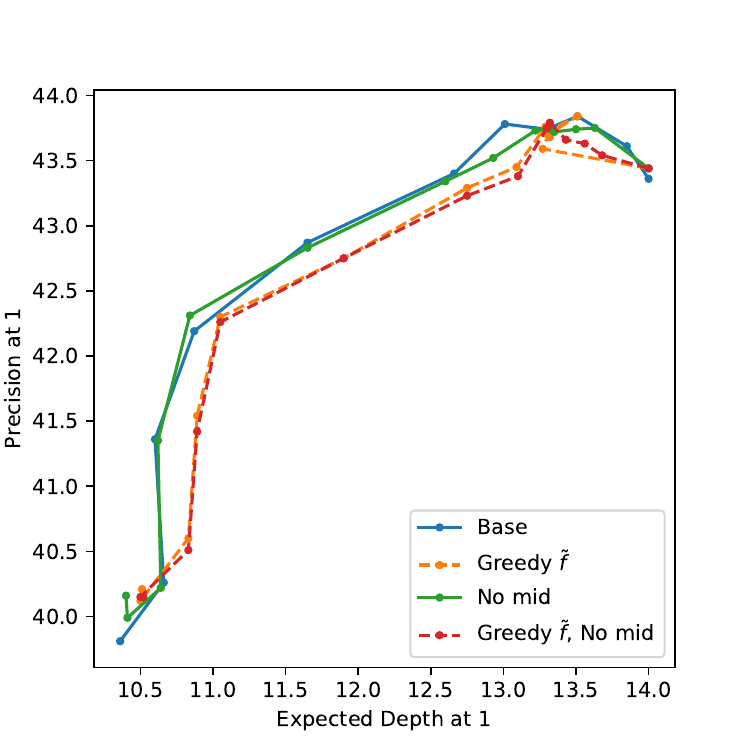}
		\caption{p@1 for Amazon-670K}
	\end{subfigure}
	\hfill
	\begin{subfigure}[b]{0.23\textwidth}
		\includegraphics[width=\columnwidth, trim=.4cm .3cm 1.2cm 1.5cm, clip]{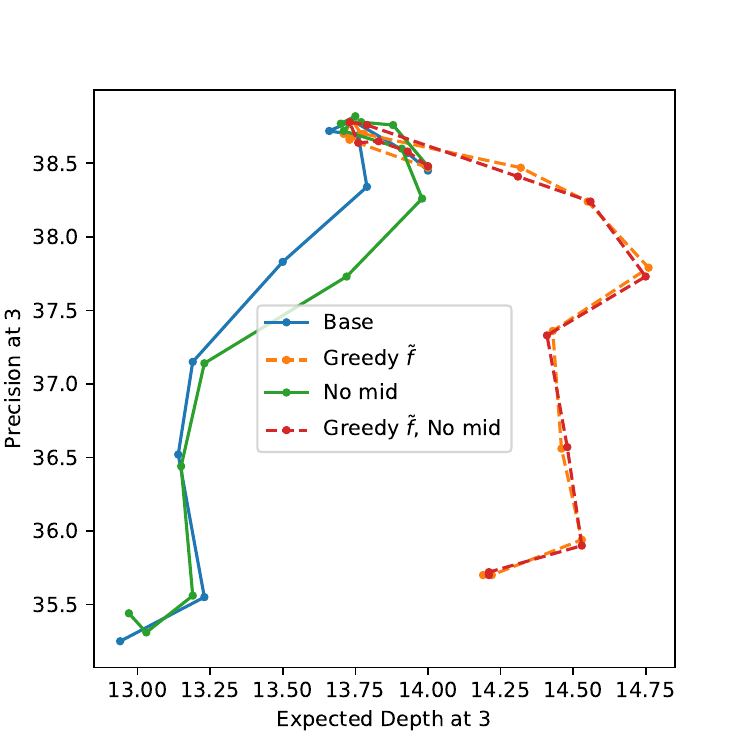}
		\caption{p@3 for Amazon-670K}
	\end{subfigure}
	\vspace{-.35cm}
	\caption{Model expected depth versus precision for Parabel augmented with our algorithm.}
	\label{fig:parabel}
	\vspace{-.4cm}
\end{figure}

Another question one can ask is whether this intermediary weighted 2-means clustering point is necessary.
That is, what if we directly interpolate between our Fano coding tree and balanced 2-means clustering?
The results for this are shown in the green curve, which utilizes the marginal frequency based $\widetilde{\b{f}}$, and does not interpolate through our intermediary weighted 2-means clustering point.
Where the orange curve is a modification to the blue curve, the red curve is a similar modification of the green curve. 
For this red curve, we interpolate directly from a Fano coding tree from a greedily constructed $\widetilde{\b{f}}$ to a balanced 2-means clustering tree. Note that the red curve almost fully overlays the orange one.


We see that while these 4 schemes perform very similarly in terms of their precision at 1, their precision at 3 differs dramatically.
The schemes utilizing the greedily constructed $\widetilde{\b{f}}$ perform dramatically worse than the schemes using the marginal frequency based $\widetilde{\b{f}}$.
Additionally, we can see that not utilizing the intermediary weighted 2-means objective yields worse performance (going from the blue to the orange curve).

\subsection{Augmented fastText} \label{sec:simulations_fasttext}
We augment fastText with our trees for different values of $\lambda$ and evaluate the performance on the large-scale public XMC datasets AmazonCat-13K and Wiki-500K.
We interpolate the full spectrum from fully similarity-based ($\lambda=0$) to coding-theoretic ($\lambda=2$).
Note that the case of $\lambda=2$ is equivalent to top-down Fano coding but is not equivalent to fastText's bottom-up Huffman coding. Thus, we plot fastText's default Huffman algorithm for comparison.
Figures \ref{fig:amazoncat13k_fasttext} and \ref{fig:wiki500k_fasttext} show the results on AmazonCat-13K and Wiki-500K, respectively.
We observe similar empirical results as in the setting of augmenting Parabel with our algorithm, noting that the expected depths are higher in this setting due to the leaf size of 1 for fastText in contrast to the leaf size of 100 for Parabel.

\begin{figure}[h]
	\centering
	\vspace{-.2cm} 
	\begin{subfigure}[b]{0.23\textwidth}
		\includegraphics[width=\columnwidth, trim=.4cm .3cm 1.2cm 1.4cm, clip]{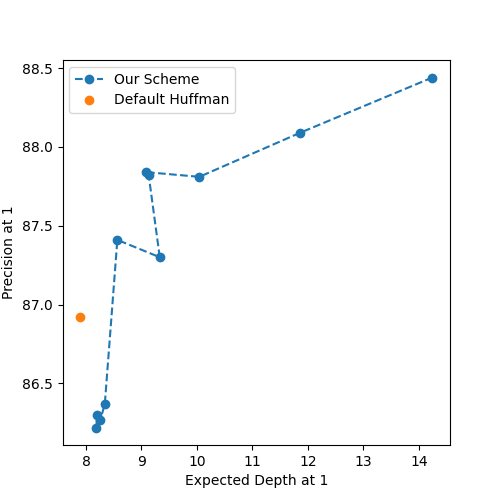}
		\vspace{-.55cm}
		\caption{p@1 for AmazonCat-13K}
		\label{fig:amazoncat13k_fasttext}
	\end{subfigure}
	\hfill
	\begin{subfigure}[b]{0.23\textwidth}
		\includegraphics[width=\columnwidth, trim=.4cm .3cm 1.2cm 1.4cm, clip]{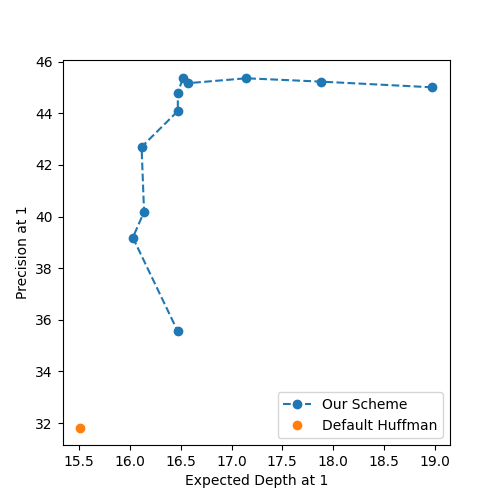}
		\vspace{-.55cm}
		\caption{p@1 for Wiki-500K}
		\label{fig:wiki500k_fasttext}
	\end{subfigure}
\vspace{-.4cm}
\caption{Model expected depth versus precision@1 for fastText augmented with our algorithm.}	
\vspace{-.6cm}
\end{figure}


\subsection{Applying augmented Parabel to e-commerce customer logs}\label{sec:simulations_amazon}
\newlength{\figH}
\setlength{\figH}{2.6cm}
\newlength{\subFigW}
\setlength{\subFigW}{0.2\textwidth}
\begin{figure}[b!]
\captionsetup[subfigure]{aboveskip=1pt,belowskip=0pt}
	\centering
	\vspace{-.55cm}
	\begin{subfigure}[b]{\subFigW}
		\includegraphics[height=\figH, trim=1.3cm .25cm 1.58cm 1.4cm, clip]{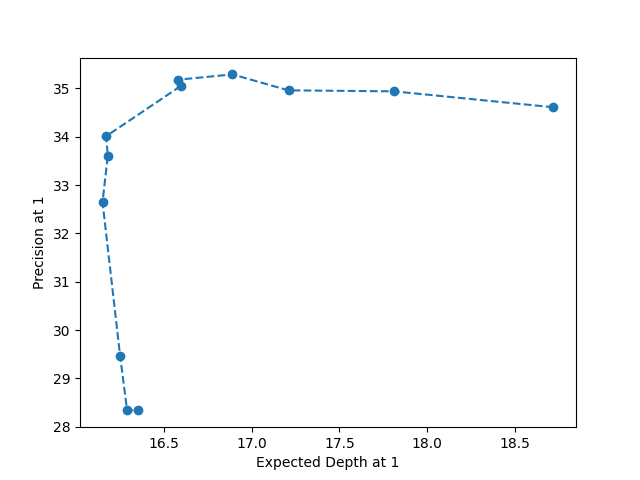}
		\caption{p@1 for Amazon-705K}
	\end{subfigure}
	\hspace{.5cm}
	\begin{subfigure}[b]{\subFigW}
		\includegraphics[height=\figH, trim=1cm .25cm 1.58cm 1.4cm, clip]{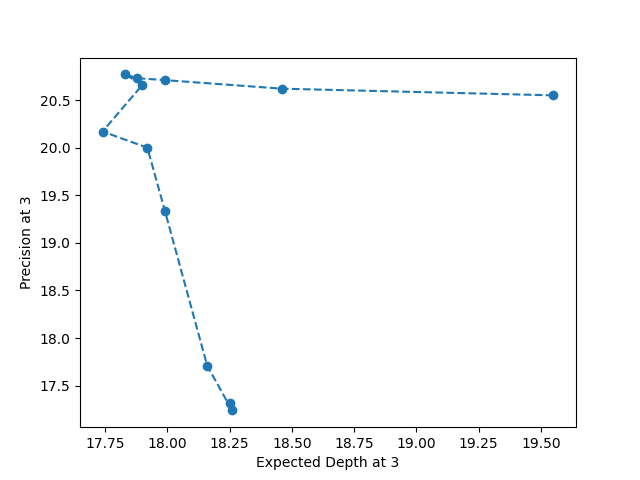}
		\caption{p@3 for Amazon-705K}
	\end{subfigure}
	
	\begin{subfigure}[b]{\subFigW}
		\includegraphics[height=\figH, trim=1.3cm .25cm 1.58cm 1.4cm, clip]{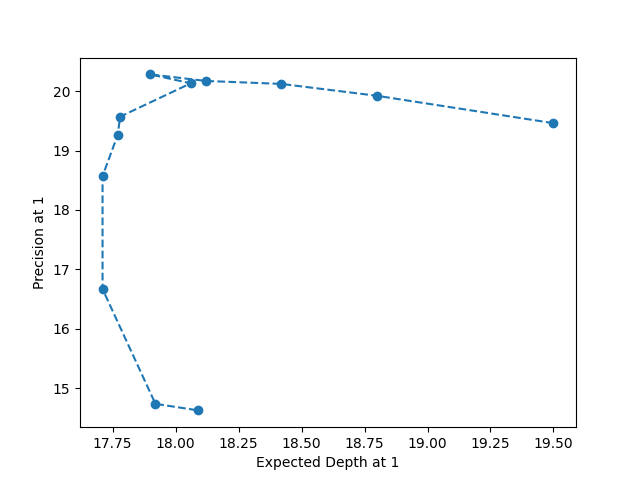}
		\caption{p@1 for Amazon-1M}
	\end{subfigure}
	\hspace{.5cm}
	\begin{subfigure}[b]{\subFigW}
		\includegraphics[height=\figH, trim=1cm .25cm 1.58cm 1.4cm, clip]{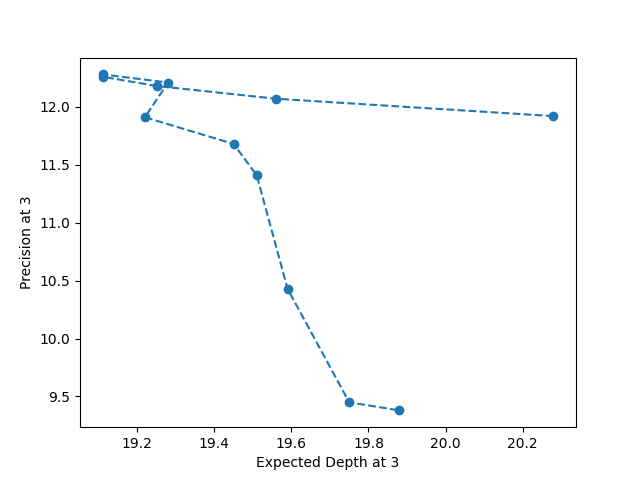}
		\caption{p@3  for Amazon-1M}
	\end{subfigure}
	
	\begin{subfigure}[b]{\subFigW}
		\includegraphics[height=\figH, trim=1.3cm .25cm 1.58cm 1.4cm, clip]{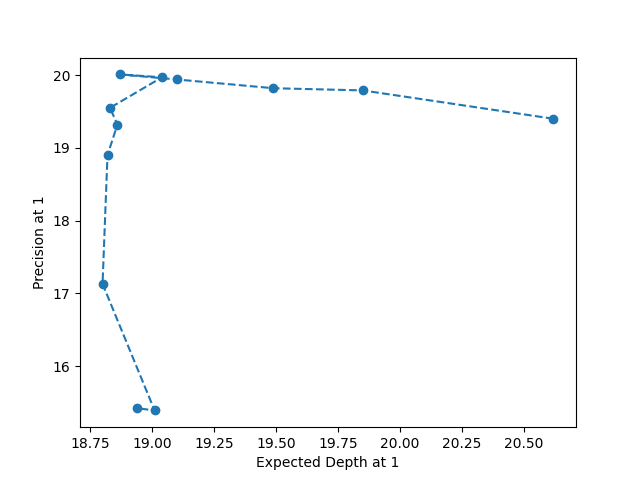}
		\caption{p@1  for Amazon-2M}
	\end{subfigure}
	\hspace{.5cm}
	\begin{subfigure}[b]{\subFigW}
		\includegraphics[height=\figH, trim=1cm .25cm 1.58cm 1.4cm, clip]{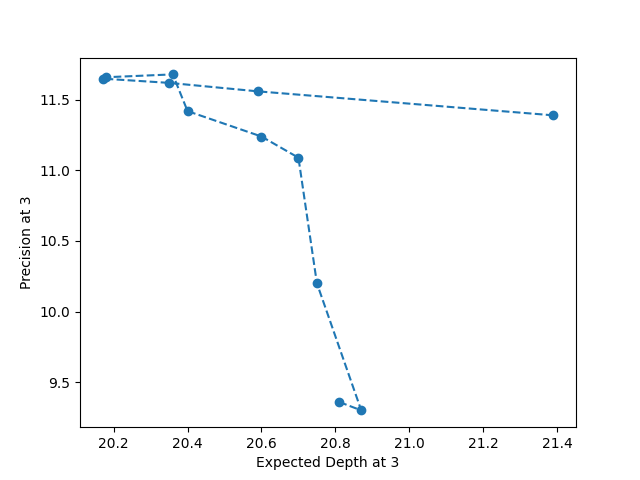}
		\caption{p@3 for Amazon-2M}
	\end{subfigure}
	
	\begin{subfigure}[b]{\subFigW}
		\includegraphics[height=\figH, trim=1.3cm .25cm 1.58cm 1.4cm, clip]{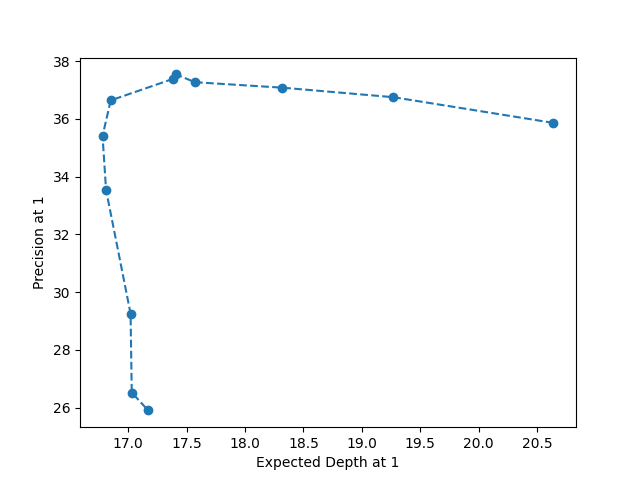}
		\caption{p@1 for Amazon-3M}
	\end{subfigure}
	\hspace{.5cm}
	\begin{subfigure}[b]{\subFigW}
		\includegraphics[height=\figH, trim=1.3cm .25cm 1.58cm 1.4cm, clip]{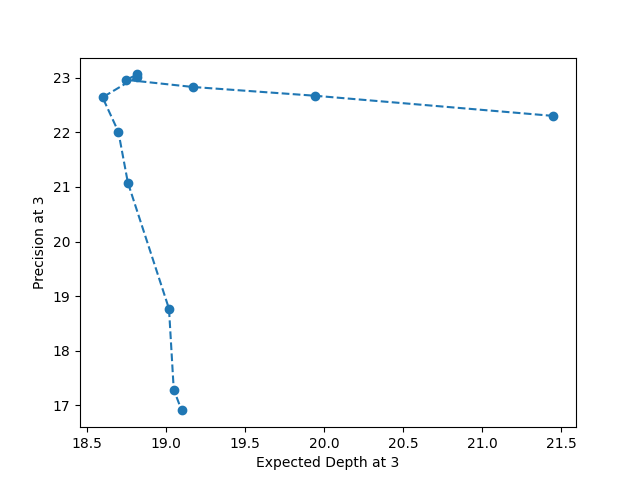}
		\caption{p@3 for Amazon-3M}
	\end{subfigure}
	
	\begin{subfigure}[b]{\subFigW}
		\includegraphics[height=\figH, trim=1cm .25cm 1.58cm 1.4cm, clip]{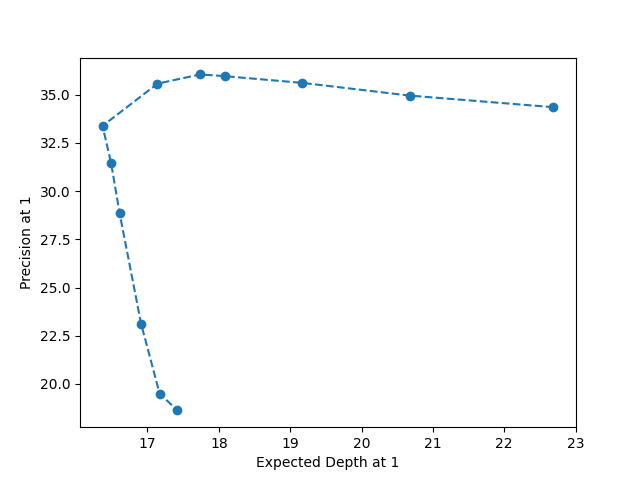}
		\caption{p@1 for Amazon-10M}
	\end{subfigure}
	\hspace{.5cm}
	\begin{subfigure}[b]{\subFigW}
		\includegraphics[height=\figH, trim=1.3cm .25cm 1.58cm 1.4cm, clip]{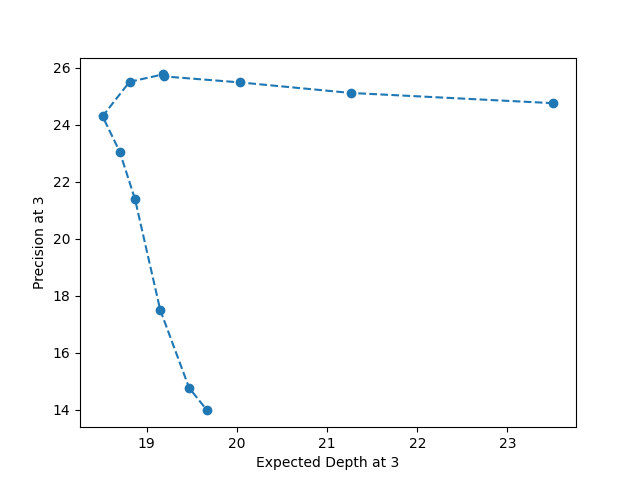}
		\caption{p@3 for Amazon-10M}
	\end{subfigure}
	\vspace{-0.4cm}
	\caption{Expected depth versus precision for Parabel augmented with our algorithm}
	\label{fig:amazon_T}
	\vspace{-.3cm}
\end{figure}

Here we show the effectiveness of our algorithm's augmentation of Parabel in a prototypical e-commerce problem: given a user's context, retrieve a small subset of relevant items from an enormous catalog of products.
XMC models are natural candidates for addressing this problem because the inputs (user contexts) are often high-dimensional and the output space (the catalog of products) is discrete, finite, and extremely large.

\begin{table}[h]
	\begin{tabular}{l|lllll}
		\hline
		Dataset       & $N$         & $d$         & $L$       & $N'$      & Avg labels/pt  \\ \hline
		Amazon-705K & 951K & 207K  & 705K & 50K & 2.94 \\
		Amazon-1M & 1.3M & 200K & 1.2M & 66K & 2.94 \\
		Amazon-2M & 2.4M & 486K & 2.5M & 127K & 2.28 \\
		Amazon-3M & 7.9M &  794K  & 2.7M   & 364K &  3.12  \\
		Amazon-10M & 29.9M  &  1.6M &  9.9M & 1.6M & 6.03  \\ \hline
	\end{tabular}
	\caption{E-commerce dataset statistics. }
	\label{tab:amazon_datasets}
	\vspace{-.8cm}
\end{table}

To generate each dataset, we aggregate one year of Amazon customer product engagement logs, remove context-product pairs that have low engagement, designate a random 5\% of the user contexts as test data, and then use the remaining 95\% as training data.
We represent the user contexts as sparse high-dimensional embeddings.
As seen in Table \ref{tab:amazon_datasets}, we generate datasets ranging from 705 thousand to 10 million products (labels) in order to demonstrate the effectiveness of our algorithm in e-commerce datasets of varying orders of magnitude. Figure \ref{fig:amazon_T} shows the results.

On all these e-commerce datasets, we observe that our algorithm reduces expected depth at 1 by at least 10\% while maintaining precision at 1.
On the largest dataset, our algorithm is able to maintain precision at 1 while reducing expected depth by 20\%.
We similarly observe, to a lesser degree, that our algorithm can reduce expected depth at 3 while marginally improving precision at 3.
Given the large scale of modern e-commerce services, as provided by Amazon and others, these efficiency gains could lead to significant reductions in the infrastructure costs of using tree-based XMC models.

\subsection{Experimental details}\label{sec:experimental_details}

The code used in the numerical results is proprietary and cannot be released to the public. However, we provide details in this section with the aim of making our results reproducible.
\vspace{-.1cm}
\subsubsection{Datasets}
Section \ref{sec:simulations_parabel} and Section \ref{sec:simulations_fasttext} use the train and test datasets provided in \cite{Bhatia16_XMCRepo}.
Section \ref{sec:simulations_parabel} uses the accompanying bag-of-words input features, whereas Section \ref{sec:simulations_fasttext} uses the accompanying raw text features. 

\vspace{-.1cm}
\subsubsection{Models and hyperparameters}

Section \ref{sec:simulations_parabel} and Section \ref{sec:simulations_amazon} use a proprietary implementation of XR-LINEAR \cite{yu2020pecos}, which can be viewed as a generalization of Parabel, and we use the suggested settings to recover the special case of Parabel.
In particular, we use XR-LINEAR with positive instance feature aggregation (PIFA) label representations, teacher forcing negative sampling, and squared hinge loss.
We replace the clustering algorithm with our own, which is equivalent to Parabel's when $\lambda=0$.
For the hyperparameters of Parabel, we set the number of trees to 1 and use the default recommendations for the remaining hyperparameters; i.e., 10 for the maximum number of paths that can be traversed in a tree at prediction time, 100 for the maximum number of labels in a leaf, and 1 for the misclassification penalty for all nodes.

Section \ref{sec:simulations_fasttext} uses the open source implementation of fastText \footnote{https://github.com/facebookresearch/fastText} with the minimal modifications that are needed to use a custom tree in the hierarchical softmax.
We use default fastText hyperparameters with the following changes: hierarchical softmax for the loss function, 0.5 for the learning rate, 200 for the number of epochs, 128 for the dimension of the embeddings, 2 for the minimal number of word occurrences, and 2 for the max length of word n-grams.
\vspace{-.1cm}
\subsubsection{Our clustering algorithm}

For Section \ref{sec:simulations_parabel} and Section \ref{sec:simulations_amazon}, our clustering algorithm was applied recursively until no more than 100 labels remained. The remaining labels then formed a leaf node. For Section \ref{sec:simulations_fasttext}, our clustering algorithm was applied recursively until no more than 1 label remained.

In Section \ref{sec:simulations_parabel}, we ran our algorithm for $\lambda'=$ 0, 0.5, 1, 2, 6, 10, 30, 100, 300, 1000, 100000 where $\lambda$ is obtained as $\lambda = 2 \lambda' / (1 + \lambda')$. In Section \ref{sec:simulations_fasttext}, we ran our algorithm for $\lambda'=$ 0, 0.25, 0.5, 0.75, 1, 2, 4, 16, 64, 256, 100000. In Section \ref{sec:simulations_amazon}, we ran our algorithm for $\lambda'=$ 0, 0.25, 0.5, 0.75, 1, 2, 10, 30, 1000, 100000.

In the implementation of our clustering scheme, we round the $\b{\alpha}$ after each iteration of the clustering algorithm. An additive Laplacian smoothing parameter of $\gamma=0.1$ was used for all simulations.

\section{Conclusions and Future Work}\label{sec:conc}
In this work we studied the problem of constructing probabilistic label trees that incorporate both frequency and similarity information.
While state of the art schemes ignore one of the two, yielding suboptimal statistical performance or latency, we designed a practical and efficient algorithm for generating a PLT that utilizes both label similarity and frequencies. Our scheme provides a knob to trade off between computational efficiency and statistical performance, which was not previously possible.
This approach has promising empirical performance on both public datasets and those derived from e-commerce customer logs, and provides a novel theoretical bridge between these two extremes.

One important direction of future work is realizing these expected depth gains as latency improvements in online applications.
The focus of this work was on constructing a PLT with low expected depth, meaning that the relevant labels that a standard beam search algorithm returns are high up in this tree.
Translating these expected depth gains into wall-clock improvements would require modifying the beam search procedure to use an adaptive stopping condition, returning results before finishing traversing all its paths.
Such an adaptive stopping condition would require application-specific parameters, and yield highly implementation dependent latency improvements, and so to compare against implementations of Parabel and fastText we used a standardized metric of expected depth.
Developing and analyzing this modified beam search is a critical line of future work towards realizing these expected depth gains as deployable latency improvements.

\vspace{.1cm}
\appendix
\begin{center}
    {\huge \textbf{Appendices}}
\end{center}

\section{Additional Numerical Results} \label{app:additionalSims}
Below, we include analogues of Figure \ref{fig:wiki500k_labelFreqs} for Amazon-670K (Figure \ref{fig:amazon670k_labelFreqs}) and AmazonCat-13K (Figure \ref{fig:amazoncat13k_labelFreqs}).

\begin{figure}[h]
\captionsetup[subfigure]{aboveskip=1pt}
	\centering
	\begin{subfigure}[b]{0.23\textwidth}
		\includegraphics[width=\columnwidth, trim=.3cm .5cm .7cm 1.4cm, clip]{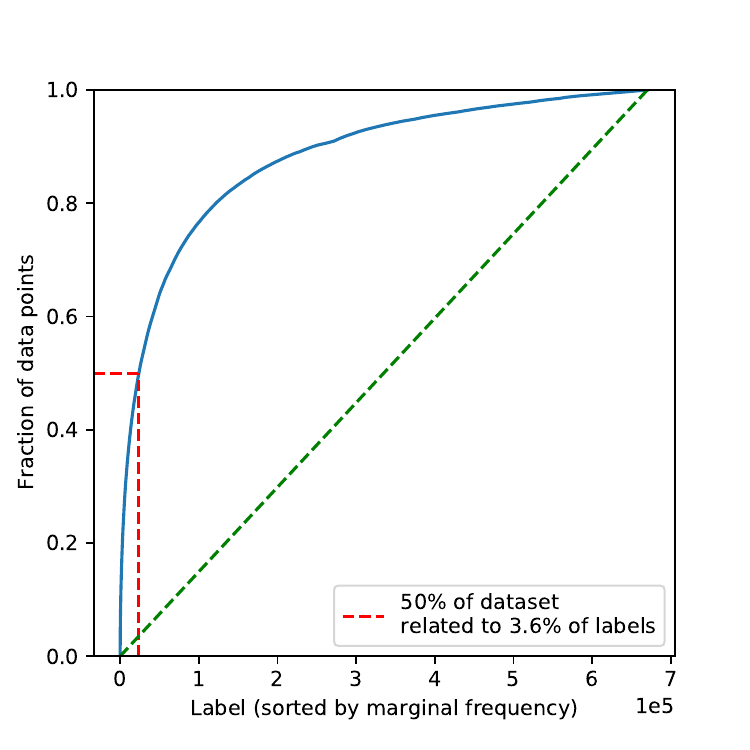}
		\caption{Amazon670k}
		\label{fig:amazon670k_labelFreqs}
		\vspace{.1cm}
	\end{subfigure}
	\hfill
	\begin{subfigure}[b]{0.23\textwidth}
		\includegraphics[width=\columnwidth, trim=.3cm .5cm .7cm 1.4cm, clip]{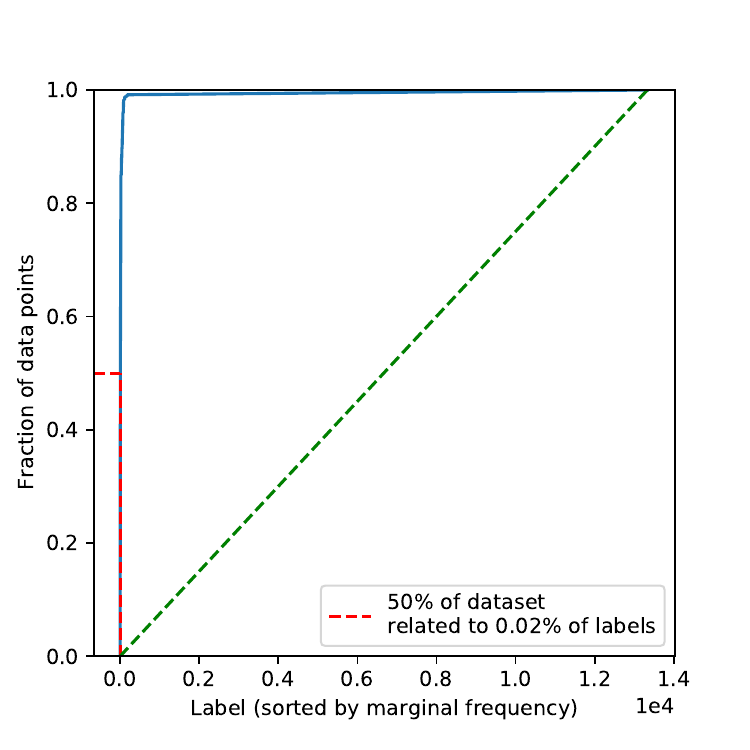}
		\caption{AmazonCat-13K}
		\label{fig:amazoncat13k_labelFreqs}
		\vspace{.1cm}
	\end{subfigure}
	\begin{subfigure}[b]{0.23\textwidth}
		\includegraphics[width=\columnwidth, trim=.3cm .5cm .7cm 1.4cm, clip]{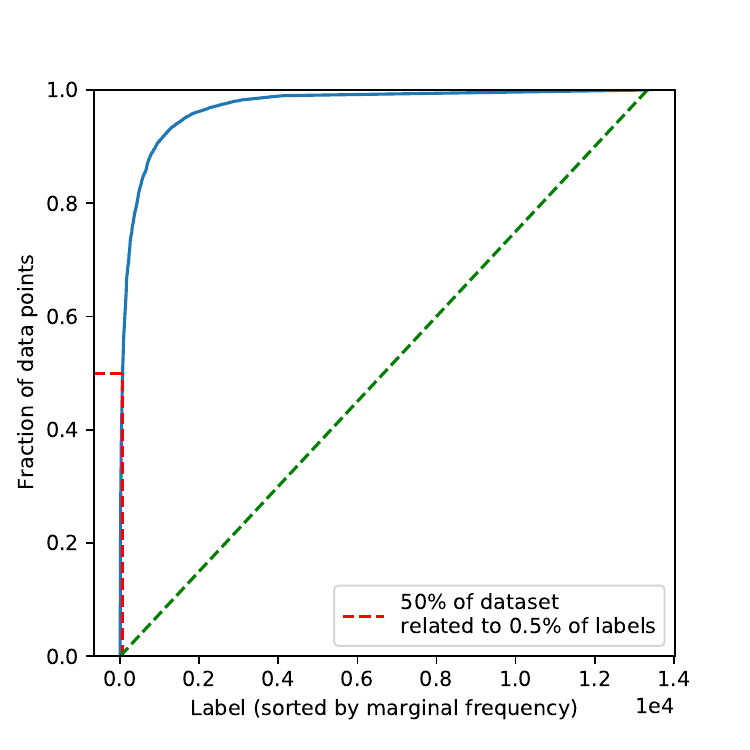}
		\caption{3 labels, AmazonCat-13K}
	\end{subfigure}
	\hfill
	\begin{subfigure}[b]{0.23\textwidth}
		\includegraphics[width=\columnwidth, trim=.3cm .5cm .7cm 1.4cm, clip]{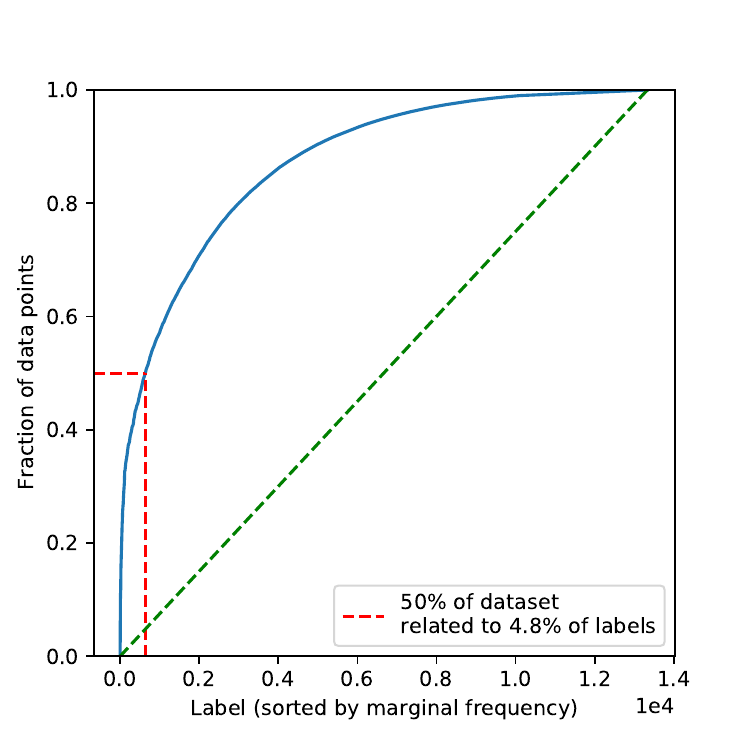}
		\caption{All labels, AmazonCat-13K}
	\end{subfigure}
\vspace{-.3cm}
	\caption{Label frequency imbalance on public datasets.}\label{fig:amazoncat13k_alllabelFreqs}
	\vspace{-.3cm}
\end{figure}

%

We can also measure the label imbalance by not just desiring one label per data point, but instead requiring 3 relevant labels per data point (to obtain 100\% recall at 3), or require that all relevant labels for a given data point be included in our subset.
We see in Figure \ref{fig:amazoncat13k_alllabelFreqs}(c,d) how this looks for AmazonCat-13K.

\section{Constructing frequency vector $\widetilde{\b f}$} \label{app:ftilde}

For the case of $k=1$, we can construct a frequency vector for our tree construction scheme by greedily finding the most frequent label in the dataset, assigning it a frequency proportional to the number of contexts it appears in, and then removing that label and its associated contexts.
This is formalized in Algorithm \ref{alg:greedyFreqConstruction}.
However, for $k>1$, this seems to be a combinatorially hard problem, for which generating an efficient solution is (to our knowledge) an open problem. 
In light of this, we construct these frequencies for $k>1$ by the same greedy scheme mentioned above, and call them $\widetilde{\b f}$ to denote them as the vector of frequencies that we use for our Fano tree.
This is to distinguish them from our vector of marginal label frequencies $\b f$, which need not be good at minimizing expected depth.
In detailing our algorithm, we utilize the label matrix $Y\in \{0,1\}^{N\times L}$, where $Y_{i,\ell}=1$ if label $\ell$ is relevant for data point $i$.

\setlength{\textfloatsep}{0pt}
\setlength{\intextsep}{3pt}

\begin{algorithm}[h]
\begin{algorithmic}[1]
\caption{\label{alg:greedyFreqConstruction} \texttt{Greedy $\widetilde{\b f}$ construction}}
\State \textbf{Input:} label matrix $Y$, context frequencies $\b p$
\State{$\widetilde{\b f} \gets \b{0}$ \Comment{initialize all 0s}}
\While{$Y^\top \b{p}\neq0$}
\State
$ i^{*}= \argmax_i Y^\top \b{p}$
\State $\widetilde{f}_{i^{*}} = [Y^\top \b{p}]_{i^{*}}$
\State $\b{p}_{Y^{(i^{*})}} = \b{0}$ \Comment{contexts that relate to label $i^{*}$}
\EndWhile
\State{\Return $\widetilde{\b f}$}
\end{algorithmic}
\end{algorithm} 

In practice, another approach for constructing $\widetilde{\b f}$ has similar performance, and a similarly intuitive justification.
We begin by taking our marginal frequency vector $\b{f} \propto Y^\top \mathds{1}$, and sorting these label frequencies by decreasing frequency to get a ranking of labels from most to least frequent.
We then iterate over the contexts, and for each context assign its frequency to the most frequent label which it contains.
This is detailed more formally in Algorithm \ref{alg:marginalFreqConstruction}.
This can be softened, as the standard vector $\b{f}$ can be constructed by, for each context, adding its frequency to each label it contains.
For this extreme $\widetilde{\b f}$ we assign all of the context's frequency just to the most common label, but we can smoothly interpolate between the two, trading off between uniformly adding the frequency to all relevant labels and just the most common.

\begin{algorithm}[h]
\begin{algorithmic}[1]
\caption{\label{alg:marginalFreqConstruction} \texttt{Marginal $\widetilde{\b f}$ construction}}
\State \textbf{Input:} label matrix $Y$, context frequencies $\b p$
\State{$\widetilde{\b f} \gets \b{0}$ \Comment{initialize all 0s}}
\State $\b{f} = Y^\top \b p$
\For{$i=1,\hdots,n$}
\State $ j^{*}= \argmax_j Y_j \odot \b{f}$ \Comment{$\odot$ is entrywise multiplication}
\State $\widetilde{f}_{j^{*}} = \widetilde{f}_{j^{*}} + p_{j^{*}}$
\EndFor
\State{\Return $\widetilde{\b f}$}
\end{algorithmic}
\end{algorithm} 
Comparing these constructions of $\widetilde{\b f}$ with $\b f$, we see that we are simply zeroing out many coordinates, and reducing the value of some others, leading to a more non-uniform distribution.

One further interesting note is that in many real world XMC applications like Dynamic Search Advertising, we not only have label frequencies, but also context frequencies.
That is, some contexts are much more common than others.
For this work we focused on a uniform distribution over contexts, but all methods we propose can be extended to this scenario with context frequencies, as denoting these with $\b{p}$ we can instead construct $\b{f} \propto Y^\top \b{p}$.

\newpage
\clearpage
 \bibliographystyle{ACM-Reference-Format}
 \balance
\bibliography{mybib.bib}
\end{document}